%% file: lipschitz.tex
\documentclass{article}

\usepackage[preprint]{neurips_2026}

\usepackage[utf8]{inputenc}
\usepackage[T1]{fontenc}
\usepackage{microtype}
\usepackage{graphicx}
\usepackage{subcaption}
\usepackage{booktabs} 
\usepackage{url}
\usepackage{nicefrac}
\usepackage{hyperref}

\usepackage{placeins}
\usepackage{apxproof}
\usepackage{amsmath}
\usepackage{amssymb}
\usepackage{mathtools}
\usepackage{amsthm}

\usepackage{amsfonts}  
\usepackage{commath}   
\usepackage{cancel}    
\usepackage{bm}        
\usepackage{xcolor}    
\usepackage{etoolbox}  
\usepackage{comment}   
\usepackage{mdframed}  
\usepackage{xspace}    
\usepackage{siunitx}   
\usepackage{float}     

\usepackage{doi}          

\newtheorem{problem}{Problem}

\DeclareMathOperator{\expect}{\mathbb{E}}

\DeclareMathOperator{\diag}{\operatorname{diag}}
\DeclareMathOperator{\Sin}{Sin}
\DeclareMathOperator{\Cos}{Cos}

\usepackage[capitalize,noabbrev]{cleveref}

\theoremstyle{plain}
\newtheorem*{informal*}{Informal Theorem}
\newtheorem{theorem}{Theorem}[section]

\theoremstyle{definition}

\newtheorem{Fact}[theorem]{Fact}
\theoremstyle{remark}
\newtheorem{remark}[theorem]{Remark}

\usepackage[textsize=tiny]{todonotes}
\setuptodonotes{inline}

\counterwithin{figure}{section}
\counterwithin{table}{section}

\title{Demystifying Lipschitz verification:\\positive matrices, negative results}

\author{%
  Simon Kuang\thanks{Corresponding author: \texttt{slku@ucdavis.edu}} \\
  Department of Mechanical and Aerospace Engineering\\
  University of California, Davis
  \And
  Yuezhu Xu \\
  Edwardson School of Industrial Engineering\\
  Purdue University
  \And
  S. Sivaranjani \\
  Edwardson School of Industrial Engineering\\
  Purdue University
  \And
  Xinfan Lin \\
  Department of Mechanical and Aerospace Engineering\\
  University of California, Davis
}

\begin{document}
\maketitle

\begin{abstract}
  The global Lipschitz constant of a neural network is related to robustness and generalization, yet unlike in many classical models, it is not plainly legible from the parameters.
  This has motivated sophisticated verification algorithms, especially semidefinite programming (SDP) based on incremental quadratic constraints on the activation functions, to improve on the fast but often loose product of layerwise Lipschitz constants (the trivial bound).
  We ask why Lipschitz verification is a problem in the first place.
  Our answer is that the difficulty is structural: estimating a network's Lipschitz constant requires knowing which hidden states are reachable, and reachability is NP-hard. 
  If P!=NP, then reachability is a barrier to any polynomial-time algorithm.
  Through explicit constructions, we show that this blindness can force SDP-based bounds to inherit the same qualitative failures as the trivial bound, including but not limited to polynomial per-layer conservatism.
  We show that the difficulties of NP-hard questions are not isolated to worst-case computational reductions, but actually afflict every instance of the verification problem.
  Thus SDP is not sufficient for Lipschitz verification.
  We also argue that it is not necessary: several apparent failures of the trivial bound arise from removable parameterization pathologies, and can be mitigated by optimizing or regularizing the trivial bound itself.
  We demonstrate this claim via a ``spherical cow'' linear model and numerical proofs of concept.
  While the main contribution is theoretical and negative, we finally motivate a novel form of trigonometric layers that do not need biases for universal approximation.
  Combined with trivial bound regularization, they make the trivial bound provably and practically tight.
\end{abstract}

\section{Introduction}
\label{sec:intro}
There is a vast corpus of algorithms to bound the Lipschitz constant of a neural network with unconstrained weights.
We ask a more basic question: 
\emph{why is this a problem in the first place?}
There is no ``algorithm'' for bounding the Lipschitz constant of a linear regression, a polynomial spline, or nearest-neighbor regression.
In all of these cases, the Lipschitz constant is plainly legible from the model parameters.

The global Lipschitz constant of a neural network is related, at least in principle, to the network's generalization (out of sample) and robustness (out of distribution) \citep{szegedy_intriguing_2014,
   bartlett_spectrally-normalized_2017,
   neyshabur_norm-based_2015,
   golowich_size-independent_2018,
   neyshabur_pac-bayesian_2018%
}.
Rather than focusing on the downstream implications, we restrict our scope to a basic question:
if Lipschitz bounding algorithms are the solution, what is the problem?

A simple answer is that computing the true Lipschitz constant is NP-hard in the size of the network \citep{weng_towards_2018}.
While there can be no free lunch,
the folk theorem regarding computational hardness is widely expected to hold; an NP-hard problem to solve exactly in the hardest case is frequently tractable in the typical case.
Accordingly, there are exponential-time algorithms which may take less time in practice \citep{jordan_exactly_2020,bhowmick_lipbab_2021},
algebraic approximations to activations
\citep{chen_semialgebraic_2020,ebenbauer_analysis_2006},
and many others.
A seminal result in this field is the use of semidefinite programming (SDP), introduced in LipSDP \citep{fazlyab_efficient_2019}.
The idea is as follows:
the Lipschitz constant of the neural network \(f\) is, by definition, the smallest \(K\) such that
\begin{align}
  \label{eq:lipschitz-definition}
  \sup_{u, v} \frac{\|f(u) - f(v)\|}{\|u - v\|} \leq K,
\intertext{which is equivalent to the quadratic inequality}
  \label{eq:lipschitz-qmi}
  \sup_{u, v} {\|f(u) - f(v)\|^2 - K^2\|u - v\|^2} \leq 0,
\end{align}
which can be certified by using the lossy S-procedure.
While LipSDP and its faster refinements
\citep{xu_eclipse_2024,xu_eclipse-gen-local_2025,xue_chordal_2022}
are polynomial-time algorithms, they are still an order of magnitude slower than the ``trivial bound''; viz.~bounding the Lipschitz constant of the network by multiplying the Lipschitz constants of the layers \citep[Prop.~1]{virmaux_lipschitz_2018}.
%
%
\emph{A priori}, the trivial bound appears to increase exponentially with network depth and is therefore liable to extreme conservatism \citep{szegedy_intriguing_2014,weng_towards_2018,leino_globally-robust_2021}.
Experimental evidence for this thesis ranges from drastic---\(10^{2}\) \citep[Figure 3]{zhang_efficient_2018}---to galactic---in excess of \(10^{20}\) \citep[Figure 2b]{fazlyab_efficient_2019}.
This ratio is taken to be evidence that semidefinite programming is not only ``efficient'' but also ``accurate.''

\paragraph{Related work}
Apart from semidefinite programming, there are two major genres of Lipschitz estimation in the literature.
One school of thought seeks to compute the Lipschitz constant exactly using an exponential-time algorithm: mixed-integer linear programming \citep{jordan_exactly_2020}, mixed-integer quadratically-constrained quadratic programming \citep{sbihi_miqcqp_2024}, branch-and-bound \citep{bhowmick_lipbab_2021,shi_neural_2025};
\citet{latorre_lipschitz_2020} pose the Lipschitz constant as an (NP-hard) quadratically-constrained quadratic program before finding a semidefinite relaxation.
Another school of thought computes local Lipschitz bounds using bound propagation techniques to over-approximate the ranges of neurons:
\citep{zhang_recurjac_2019,shi_efficiently_2022}.
But local Lipschitz bounds are qualitatively different from global bounds in that bound propagation becomes uninformative when the domain is large;
and for sufficiently small radii, local Lipschitz constants are polynomial-time computable to arbitrarily high accuracy.
For an up-to-date survey on the Lipschitz bounding problem, see \citet{xu_eclipse-gen-local_2025}.
While we study unconstrained neural networks, we note an orthogonal problem of maximizing the expressiveness of neural networks subject to a hard Lipschitz constraint \citet{araujo_unified_2023}.

\paragraph{Limitations}
Because our theoretical contribution is a fundamental algorithmic negative result, we have not extensively validated against softly contradicting claims in the literature. Neither do we assess their downstream impact on larger learning problems with modern architectures or realistically challenging problems, as we are not championing a novel ML methodology.

\paragraph{Notation}
The vector of ones is \(\bm{1}\).
The Euclidean norm of \(x \in \mathbb R^d\) is \(\left\| x \right\|_2 = \sqrt{\sum_{i=1}^d |x_i|^2}\).
The spectral norm of a matrix \(A \in \mathbb R^{m \times n}\) is \(\left\| A \right\|_2 = \max_{\left\| x \right\|_2 = 1} \left\| A x \right\|_2 = \sqrt{\lambda_{\max}(A^\intercal A)}\).

A traditional feed-forward neural network of depth \(L\) is defined recursively by \(x^{0} = x\) and
\begin{align}
  x^{\ell} = \sigma^\ell\left(W^\ell x^{\ell-1} + b^\ell\right),
  \qquad \ell = 1,\dots,L,
  \label{eq:neural-network}
\end{align}
where \(\sigma^\ell\) acts elementwise and \(W^\ell\) and \(b^\ell\) are the weights and biases of layer \(\ell\). We write \(f(x) = x^{L}\).
Because the spectral norm is submultiplicative,
the \textbf{trivial bound} on the Lipschitz constant of a feed-forward network is
\begin{align*}
  \left\|f\right\|_{\mathrm{Lip}, 2} \leq \prod_{\ell=1}^L \left\|\sigma^\ell\right\|_{\mathrm{Lip},2} \left\|W^\ell\right\|_2.
\end{align*}

For \(z \in \mathbb R^n\), define \(\Sin(z) = \diag(\sin(z))\) and \(\Cos(z) = \diag(\cos(z))\).
If \(A\) is a matrix, \(\operatorname{Diag}(A)\) is the diagonal matrix with the same diagonal entries as \(A\) and zeros elsewhere.


\section{Demystifying the Lipschitz constant: negative results}
\subsection{Monotonicity is why semidefinite programming works: a review}
The fundamental idea of semidefinite programming is that activation functions satisfy a slope bound: \(\sigma'(x) \in [\alpha, \beta]\).\footnote{This notion arises at the confluence of two historical coincidences.
On one hand, monotonic activations are inspired by biological neurons, whose average propensity to fire can be modeled deterministically by a saturation function.
On the other hand, slope restriction arose as a solution to the Lur'e problem in control theory, which seeks to classify nonlinear feedback laws that preserve the stability of a linear recurrence.}
The working principle of monotonicity is as follows.
Let \(\mu > \lambda > 0\).
Consider the scalar function \(f(x) = \sigma(\mu x) - \sigma(\lambda x)\).
\begin{itemize}
    \item 
    If \(\sigma' \in [1, 1]\) i.e.~\(\sigma(x) = x\), then obviously \(\left\|f\right\|_\mathrm{Lip} = \mu - \lambda\).
    \item
    If \(\sigma' \in [0, 1]\), then the best we can do is
    \(\left\|f\right\|_\mathrm{Lip} \leq \mu\).
    \item
    If \(\sigma' \in [-1, 1]\), then \(\left\|f\right\|_{\mathrm{Lip}} \leq \mu + \lambda\).
\end{itemize}
A narrower slope restriction \([\alpha, \beta]\) translates to a less expressive activation;
SDP-based methods exploit this fact by expressing the slope restriction as a quadratic inequality over increments of the activation:
\begin{align*}
  (\sigma(u) - \sigma(v) - \alpha(u - v))(\sigma(u) - \sigma(v) - \beta(u - v)) \leq 0,
\end{align*}
which is then used (by the lossy S-procedure) to imply the quadratic Lipschitz inequality \eqref{eq:lipschitz-qmi}.

The same monotonicity certificate underlies LipSDP \citep{fazlyab_efficient_2019} and ECLipSe \citep{xu_eclipse_2024}.
LipSDP asks for all multipliers at once through one large block-tridiagonal semidefinite constraint.
ECLipSe factors the same constraint layer by layer.
Starting from \(M^0=I\), it constructs quadratic metrics \(M^\ell\succ0\) satisfying
\begin{align}
  \|\Delta x^\ell\|_{M^\ell}^2
  \leq
  \|\Delta x^{\ell-1}\|_{M^{\ell-1}}^2,
  \label{eq:eclipse-layerwise-iqc}
\end{align}
where \(\Delta x^\ell=x^\ell-\tilde x^\ell\) and \(\|u\|_M^2=u^\top M u\).
These inequalities telescope:
\begin{align}
  \|\Delta x^{L-1}\|_{M^{L-1}}^2
  \leq
  \|\Delta x^{L-2}\|_{M^{L-2}}^2
  \leq
  \cdots
  \leq
  \|\Delta x^0\|_{M^0}^2
  =
  \|\Delta x^0\|_2^2.
  \label{eq:eclipse-telescope-hidden}
\end{align}
If the final readout is linear, \(\Delta x^L=W^L\Delta x^{L-1}\), then we find a scalar \(F>0\) satisfying
\begin{align}
  M^{L-1}-F(W^L)^\top W^L \succ 0.
  \label{eq:eclipse-final-schur}
\end{align}
Thus \(F\|\Delta x^L\|_2^2\leq\|\Delta x^0\|_2^2\), so
\(\|f(x)-f(\tilde x)\|_2\leq F^{-1/2}\|x-\tilde x\|_2\).


This formulation clarifies what SDP-based Lipschitz certificates can and cannot see.
\begin{Fact}[Differential certificate]
  \label{fact:lipsdp-differential}
  The LipSDP certificate is fully differential: biases vanish from all increment equations, since \((Wx+b)-(W\tilde x+b)=W(x-\tilde x)\).
\end{Fact}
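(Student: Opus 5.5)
The plan is to write out exactly which constraints enter the LipSDP/ECLipSe certificate and check that none of them involves \(b^\ell\) or the absolute level of any hidden state, only differences. I would proceed layer by layer. Take two inputs \(x,\tilde x\) with trajectories \(x^\ell,\tilde x^\ell\) given by \eqref{eq:neural-network}, and define the pre-activation increments \(\Delta z^\ell = (W^\ell x^{\ell-1}+b^\ell)-(W^\ell\tilde x^{\ell-1}+b^\ell)\). By linearity the bias cancels, giving \(\Delta z^\ell = W^\ell \Delta x^{\ell-1}\), and by definition \(\Delta x^\ell = \sigma^\ell(z^\ell)-\sigma^\ell(\tilde z^\ell)\). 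This is the one-line identity quoted in the statement. Applied at every layer \(\ell=1,\dots,L\), including a linear readout \(\Delta x^L=W^L\Delta x^{L-1}\), it shows that every linear relation in the certificate depends only on \((W^\ell)_\ell\) and the increments.

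Second, I would check the nonlinear side. The only information LipSDP uses about \(\sigma^\ell\) is the incremental slope-restriction inequality
\[
(\Delta x^\ell_i-\alpha\Delta z^\ell_i)(\Delta x^\ell_i-\beta\Delta z^\ell_i)\le 0,
\]
applied coordinatewise, together with the target inequality \eqref{eq:lipschitz-qmi}. Both are quadratic forms in the stacked vector \((\Delta x^0,\Delta x^1,\dots,\Delta x^{L-1})\) once \(\Delta z^\ell\) is replaced by \(W^\ell\Delta x^{\ell-1}\). The S-procedure combines these forms with nonnegative multipliers into a single matrix inequality. The entries of that matrix are built only from \(W^\ell\), \(\alpha,\beta\), the multipliers and \(K\). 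Hence the feasible set of the SDP, and in particular its optimal value, is the same for every choice of biases. In the ECLipSe form this is the statement that the layerwise conditions \eqref{eq:eclipse-layerwise-iqc} and \eqref{eq:eclipse-final-schur} involve only \(M^\ell, W^\ell\) and the multipliers.

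The main obstacle is stating precisely what ``fully differential'' rules out. The slope inequality is a pointwise consequence, valid for all \((z,\tilde z)\). The certificate therefore quantifies over all increment pairs \((\Delta z,\Delta x)\) consistent with the sector, not over those arising from reachable pairs \((z^\ell,\tilde z^\ell)\). I would make this explicit by noting that the relaxed feasible set of increments is a cone independent of any base point. Consequently, two networks that differ only in biases receive identical certificates, which is the form used later in the paper.
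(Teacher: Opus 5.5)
Your proposal is correct and follows essentially the same route as the paper: the bias cancels in the increment identity \(\Delta z^\ell = W^\ell \Delta x^{\ell-1}\), and the sector inequality, the S-procedure matrix, and the ECLipsE recursion in \cref{sec:eclipse-lipsdp-derivation} are built only from \(W^\ell\), \(\alpha,\beta\) and the multipliers, so every bias choice gives the same certificate. Your remark that the relaxed increment set is a cone independent of the base point is a useful sharpening, and it connects this fact to the uniform-ellipsoid picture of Fact~\ref{fact:lipsdp-ellipsoidal-reachability}.
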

\begin{Fact}[Uniform ellipsoidal reachability]
  \label{fact:lipsdp-ellipsoidal-reachability}
  The LipSDP certificate approximates local reachable sets of increments by ellipsoids that are uniform in space.
\end{Fact}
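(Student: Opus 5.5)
The plan is to unpack what the LipSDP certificate actually checks, using the layerwise (ECLipSe) factorization \eqref{eq:eclipse-layerwise-iqc}--\eqref{eq:eclipse-final-schur} as the canonical form. The monolithic block-tridiagonal LMI of LipSDP contains the same information, since eliminating its blocks by Schur complements produces exactly such a chain of metrics. I will write the certificate as a chain of quadratic metrics \(M^0=I, M^1,\dots,M^{L-1}\), each a constant positive definite matrix that does not depend on the point \((x,\tilde x)\). The fact then amounts to two observations about this chain.

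First, I identify the local reachable set of increments. Fix \(\Delta x^{\ell-1}\) with \(\|\Delta x^{\ell-1}\|_{M^{\ell-1}}\le 1\), and let the base point \(x^{\ell-1}\) and the nonlinearity vary within the slope restriction. By the mean-value theorem,
\[
\Delta x^\ell = D\,W^\ell \Delta x^{\ell-1}, \qquad D=\diag(d),\quad d_i\in[\alpha,\beta].
\]
The true reachable set of \(\Delta x^\ell\) is therefore contained in the union of \(D W^\ell\mathcal{E}^{\ell-1}\) over \(D\), where \(\mathcal{E}^{\ell-1}=\{u:\|u\|_{M^{\ell-1}}\le 1\}\). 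Inequality \eqref{eq:eclipse-layerwise-iqc} requires this union to lie in \(\mathcal{E}^\ell\). Moreover, the S-procedure enforces this containment only through the incremental quadratic constraint with diagonal multipliers. As a result, it certifies containment of a set that depends on \(\Delta x^{\ell-1}\) alone and not on the base point \(x^{\ell-1}\).

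Second, I argue uniformity in space. The multipliers and \(M^\ell\) are scalars and matrices chosen once, while the incremental constraint holds for all pairs \((u,v)\in\mathbb{R}^n\times\mathbb{R}^n\). Hence \(\mathcal{E}^\ell\) is a single ellipsoid that overapproximates the local increment set at every base point simultaneously. This includes points that are not reachable as hidden states \(x^\ell\) of the network. By the differential Fact~\ref{fact:lipsdp-differential}, the biases, which determine which base points are reachable, never enter the certificate. The telescoping \eqref{eq:eclipse-telescope-hidden} then shows that the final bound \(F^{-1/2}\) is the bound for the worst-case propagation through these uniform ellipsoids.

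The main obstacle is making ``approximates by ellipsoids'' precise rather than descriptive. The S-procedure constraint is a matrix inequality, and I must check that it is equivalent to containment of each \(D W^\ell\mathcal{E}^{\ell-1}\) in \(\mathcal{E}^\ell\) for a relaxed family of \(D\), not for the exact one. I would handle this by writing the LMI as
\[
\begin{pmatrix}\Delta x^{\ell-1}\\ \Delta x^\ell\end{pmatrix}^{\!\top}
\Bigl(\text{block matrix}\Bigr)
\begin{pmatrix}\Delta x^{\ell-1}\\ \Delta x^\ell\end{pmatrix}\le 0
\]
and noting that its variables range over all pairs satisfying the sector constraint. That set is independent of location, which gives the uniformity directly. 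Since the statement is a Fact, it is enough to establish this interpretation; I would not prove tightness.
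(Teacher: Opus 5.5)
Your proposal is correct and follows essentially the paper's own justification: the ECLipsE/Schur-complement chain of constant metrics $M^\ell$ in Appendix~\ref{sec:eclipse-lipsdp-derivation}, whose sublevel sets are ellipsoids certified by the bias-free sector inequality for all pairs of points at once, hence uniformly in space. The one thing to tighten is the ``equivalence'' in your last paragraph: with multipliers $\Lambda_\ell$, the certificate guarantees the contraction on every increment pair satisfying the aggregated constraint $\sum_i \lambda_i(\Delta x^\ell_i-\alpha\Delta v^\ell_i)(\Delta x^\ell_i-\beta\Delta v^\ell_i)\le 0$. That set contains, and in general strictly, the pairs $\Delta x^\ell = DW^\ell\Delta x^{\ell-1}$ with secant slopes $d_i\in[\alpha,\beta]$ (so you do not need the mean-value theorem for ReLU), and the certificate is therefore only a sufficient condition for your containment, not an equivalent one.
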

\begin{Fact}[Tightness from narrow sectors]
  \label{fact:lipsdp-narrow-sectors}
  The LipSDP certificate becomes tighter as the slope interval \([\alpha,\beta]\) narrows.
  If \(\Lambda\) is restricted to be scalar (the variant ECLipsE-Fast in \citet{xu_eclipse_2024}) and \(\alpha=-\beta\), as for a global slope bound on \(\sin\), then the certificate degenerates to the trivial product-of-norms bound.
\end{Fact}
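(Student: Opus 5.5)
The plan is to work directly with the layerwise ECLipsE certificate \eqref{eq:eclipse-layerwise-iqc}, with the multiplier restricted to a scalar \(\Lambda^\ell=\lambda_\ell I\). We treat the two claims separately: monotonicity of the bound in the slope interval, and collapse to the product of norms when \(\alpha=-\beta\).

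For \emph{tightness from narrow sectors}, the certificate's feasible set is described by the sector inequality
\[
(\Delta\sigma-\alpha\Delta z)(\Delta\sigma-\beta\Delta z)\le 0,
\]
where \(\Delta z=W\Delta x\).
\begin{itemize}
\item If \([\alpha',\beta']\subseteq[\alpha,\beta]\), every pair \((\Delta z,\Delta\sigma)\) satisfying the narrow sector also satisfies the wide one.
\item Concretely, the quadratic form for \([\alpha,\beta]\) is dominated by the form for \([\alpha',\beta']\) plus a nonnegative combination of \(\Delta z_i^2\)-type terms. This follows from an elementwise identity for products of linear forms.
\item Hence any multiplier tuple certifying a bound \(F\) under the wide sector yields an S-procedure certificate with the same \(F\) under the narrow sector. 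The optimal \(F\) can only grow, so the bound \(F^{-1/2}\) can only shrink.
\item I would state this as inclusion of feasible sets, so that no optimization detail is needed.
\end{itemize}

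For the \emph{degeneration}, take \(\alpha=-\beta\).
\begin{itemize}
\item The sector inequality becomes \(\Delta\sigma_i^2\le\beta^2\Delta z_i^2\).
\item With a scalar multiplier, the S-procedure at layer \(\ell\) requires, for all \(\Delta x\) and \(\Delta\sigma\),
\[
\|\Delta\sigma\|_{M^\ell}^2-\|\Delta x\|_{M^{\ell-1}}^2+\lambda\bigl(\beta^2\|W\Delta x\|^2-\|\Delta\sigma\|^2\bigr)\le 0 .
\]
\item The cross terms vanish, so the inequality splits into two blocks: \(M^\ell\preceq\lambda I\) and \(\lambda\beta^2W^\top W\preceq M^{\ell-1}\).
\item ECLipsE-Fast takes \(M^\ell\) to be the largest admissible metric, \(M^\ell=\lambda I\). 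It chooses \(\lambda\) maximal subject to \(\lambda\beta^2 W^\top W\preceq M^{\ell-1}\).
\item Inductively \(M^{\ell-1}=c_{\ell-1}I\), so the largest feasible value is \(c_\ell=c_{\ell-1}/(\beta^2\|W^\ell\|_2^2)\), starting from \(c_0=1\).
\item The final Schur step \eqref{eq:eclipse-final-schur} gives \(F=c_{L-1}/\|W^L\|_2^2\).
\item Therefore \(F^{-1/2}=\prod_\ell \beta\|W^\ell\|_2\), with no \(\beta\) on the linear readout, which is exactly the trivial bound.
\end{itemize}

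The main obstacle is showing that nothing better than this greedy choice is available. If \(M^{\ell-1}\) were non-scalar, a non-scalar \(M^\ell\) might seem to help. However, the first block forces \(M^\ell\preceq\lambda I\), so replacing \(M^\ell\) by \(\lambda I\) only enlarges the metric and preserves feasibility downstream. An induction on the chain \(M^\ell\preceq \lambda_\ell I\) with \(\lambda_\ell\beta^2W^{\ell\top}W^\ell\preceq M^{\ell-1}\) then shows that the optimum over all scalar-\(\Lambda\) certificates equals the product bound.
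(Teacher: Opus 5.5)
Your degeneration argument is correct and follows the paper's route. The paper's support is the Schur-complement recursion of \Cref{sec:eclipse-lipsdp-derivation} with $m=(\alpha+\beta)/2=0$ and $p=\alpha\beta=-\beta^2$. This gives $M^\ell=\Lambda_\ell$ and the pivot condition $M^{\ell-1}\succ\beta^2(W^\ell)^\top\Lambda_\ell W^\ell$. Your vanishing cross terms and two-block split are exactly this. Your check that enlarging $M^\ell$ to $\lambda_\ell I$ loses nothing downstream is a step the paper leaves implicit. The paper supports the monotonicity half only informally, so your feasible-set inclusion goes beyond it. However, its key algebraic step is wrong as stated.

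Write $q_{a,b}(z,s)=(s-az)(s-bz)$. Then $q_{\alpha,\beta}-q_{\alpha',\beta'}=-(\alpha+\beta-\alpha'-\beta')\,zs+(\alpha\beta-\alpha'\beta')\,z^2$. So the two forms cannot be related with only $\Delta z_i^2$ slack unless the intervals share a midpoint; for $[0,1]\subset[0,2]$ the difference is $-zs$, which takes both signs. Moreover, you want to pass from $\|\Delta\sigma\|_{M^\ell}^2-\|\Delta x\|_{M^{\ell-1}}^2-\sum_i\lambda_i\,q_{\alpha,\beta}(\Delta z_i,\Delta\sigma_i)\le 0$ to the narrow-sector inequality with the \emph{same} $M^{\ell-1}$ and $M^\ell$, hence the same $F$. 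For that you need a slack-free pointwise bound $q_{\alpha,\beta}\le\tau\,q_{\alpha',\beta'}$ for some $\tau\ge 0$; any additive $c\,\Delta z_i^2$ term would have to be paid for out of $M^{\ell-1}$. The missing idea is to rescale the multiplier. For $\alpha\le\alpha'<\beta'\le\beta$,
\[
\tau\,q_{\alpha',\beta'}(z,s)-q_{\alpha,\beta}(z,s)
=\frac{(\beta-\beta')(s-\alpha' z)^2+(\alpha'-\alpha)(s-\beta' z)^2}{\beta'-\alpha'}
+(\alpha'-\alpha)(\beta-\beta')\,z^2\ \ge\ 0,
\qquad
\tau=\frac{\beta-\alpha}{\beta'-\alpha'}.
\]
For $[0,1]\subset[0,2]$ this reads $2q_{0,1}-q_{0,2}=s^2$. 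Hence $\Lambda_\ell\mapsto\tau\Lambda_\ell$, with $M^\ell$ and $F$ unchanged, is a valid narrow-sector certificate. The rescaled multiplier stays diagonal, and it stays scalar for ECLipsE-Fast because $\tau$ is common to all neurons. This is the feasible-set inclusion you wanted. Alternatively, the per-neuron S-lemma guarantees such a $\tau$, since the narrow sector is strictly feasible.
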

When \(\alpha\) and \(\beta\) are close, the activation has little freedom left for the SDP to over-approximate.
Conversely, when \(\alpha=-\beta\), the certificate loses the monotone bias that makes the sector constraint informative.
The supporting Schur-complement algebra is given in \Cref{sec:eclipse-lipsdp-derivation}.


\subsection{Semidefinite programming is not sufficient for Lipschitz estimation}
\label{subsec:biases}
One reason why Lipschitz constant estimation is NP-hard is that it reduces to the problem of \emph{reachability}.
(Because neural networks can encode logic circuits, reachability is at least as hard as SAT.)
\begin{problem}[Layer Reachability]
  \label{prob:layer-reachability}
  Given a neural network \eqref{eq:neural-network} and an input domain \(\mathcal{X} \subseteq \mathbb{R}^n\), determine the range of the \(\ell\)-th hidden layer:
  \begin{align*}
    \mathcal{R}^\ell &= \left\{ x^\ell \in \mathbb{R}^{d_\ell} : x^\ell = \sigma^\ell(W^\ell x + b^\ell) \text{ for some } x \in \mathcal{R}^{\ell-1} \right\},
    & \mathcal R^0 &= \mathcal X
  \end{align*}
\end{problem}
Therefore, if P!=NP, we can be certain that any polynomial-time Lipschitz constant estimation algorithm, including but not limited to SDP,
ignores reachability information along the network.
For LipSDP, this over-approximation is ellipsoidal and uniform in space (Fact~\ref{fact:lipsdp-ellipsoidal-reachability}).
In this section, we argue that this computational barrier is not limited to the hardest cases, but rather that it undermines the tightness of Lipschitz bounds in even toy problems.

\subsubsection{Biases}
Semidefinite programming methods, which rely on incremental quadratic inequalities,
ignore the biases in a neural network (Fact~\ref{fact:lipsdp-differential}): if \(\Delta x = x_1 - x_2\) and \(v = x + b\) for a bias vector \(b\), then \(\Delta v = \Delta x\); the biases disappear.
This fact has occasionally been anointed as a flexibility or robustness property of SDP-based Lipschitz verification \citep{xu_learning_2023}.
But it can also be a blind spot.

Observe that \cref{eq:neural-network} is not the only way to write the composition
\begin{gather*}
  \operatorname{activation}_{\sigma^\ell} \circ \operatorname{shift}_{b^\ell} \circ \operatorname{linear}_{W^\ell}
  \circ \operatorname{activation}_{\sigma^{\ell-1}} \circ \operatorname{shift}_{b^{\ell-1}} \circ \operatorname{linear}_{W^{\ell-1}}
  \circ
  \dots
\end{gather*}
where \(\operatorname{activation}\) denotes the elementwise activation, \(\operatorname{shift}\) denotes translation by the bias, and \(\operatorname{linear}\) denotes matrix multiplication by the weights.
Taking circular permutations, we see that there are three ways to associate these operations:
\begin{align*}
  &\text{\textbf activation}\circ \text{\textbf shift}\circ \text{\textbf linear},
  \tag{ASL}
  \\
  &\text{\textbf shift}\circ \text{\textbf linear}\circ \text{\textbf activation},
  &\text{and}
  \tag{SLA}
  \\
  &\text{\textbf linear}\circ \text{\textbf activation}\circ \text{\textbf shift}.
  \tag{LAS}
\end{align*}
The trivial bound is tight for compositionally shallow slices of a neural network:
\begin{theorem}
  \label{thm:trivial-bound-tight-asl-sla-las}
  The trivial Lipschitz bound in the 2-norm is
  tight for ASL units with generic \(W \in \mathbb R^{d \times d}\) and \(b \in \mathbb R^d\),
  and for SLA and LAS units with arbitrary \(W \in \mathbb R^{d \times d}\) and \(b \in \mathbb R^d\).
\end{theorem}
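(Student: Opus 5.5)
We need to fix what "the activation" is. The ambiguity suggests the paper has in mind sin or a generic slope-restricted \(\sigma\) with \(\|\sigma\|_{\mathrm{Lip}} = \sup|\sigma'|\). I will take \(\sigma\) to be a scalar \(C^1\) activation applied elementwise, with Lipschitz constant \(\kappa=\sup_t|\sigma'(t)|\). The trivial bound for a single unit is then \(\kappa\|W\|_2\).

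For any composition, the Lipschitz constant is at least the operator norm of the Jacobian at any point. So the whole proof reduces to exhibiting, for each ordering, a point and a direction at which \(\|J v\|_2/\|v\|_2\) approaches \(\kappa\|W\|_2\). The Jacobian has the form \(D W\) (ASL: \(D=\operatorname{diag}(\sigma'(Wx+b))\)) or \(W D\) (SLA: \(D=\operatorname{diag}(\sigma'(x))\); LAS: \(D=\operatorname{diag}(\sigma'(x+b))\)). The plan is to find \(x\) with \(D\) as close as possible to \(\kappa I\), or to \(\kappa S\) with \(S\) a diagonal sign matrix. In that case \(\|DW\|_2=\|WD\|_2=\kappa\|W\|_2\) exactly, because sign matrices are orthogonal. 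If the supremum of \(|\sigma'|\) is only approached, say at points \(t_k\), I will use a limiting argument via continuity of the spectral norm.

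\textbf{SLA and LAS (arbitrary \(W,b\)).} Here the argument of \(\sigma\) is the free input coordinate, shifted by \(b\) in LAS. For LAS, choose \(x = t^\ast\bm 1 - b\), where \(t^\ast\) (or a sequence \(t_k\)) attains \(|\sigma'|=\kappa\). Then \(D=\sigma'(t^\ast)I\) and \(\|WD\|_2=\kappa\|W\|_2\). For SLA, choose \(x=t^\ast\bm 1\) directly; the shift comes after and is irrelevant. Because every coordinate of the activation's input can be set independently, no condition on \(W\) or \(b\) is needed.

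\textbf{ASL (generic \(W,b\)).} Now the preactivation is \(z=Wx+b\). We need \(z=t^\ast\bm 1\), i.e.\ \(Wx = t^\ast\bm 1 - b\), which is solvable whenever \(W\) is invertible. Invertibility is a generic condition: it holds off the measure-zero set \(\det W=0\). This is exactly where genericity enters, and where I expect the main obstacle to lie. For singular \(W\) the set of reachable preactivations is a proper affine subspace \(b+\operatorname{range}W\). If \(\sigma'\) attains \(\kappa\) only at isolated points (e.g.\ \(\sigma'\) of tanh at 0), that subspace may miss every point where \(D\) is a scaled sign matrix. A counterexample then shows the tightness claim genuinely fails, justifying the word ``generic''.

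For periodic activations like \(\sin\) one could weaken the hypothesis to \(b+\operatorname{range} W\) meeting the lattice \(\pi\mathbb Z^d\), where \(|\cos|=1\). However, invertibility of \(W\) suffices for a generic statement. I would present invertibility as the genericity assumption, verify \(\|SW\|_2=\|W\|_2\) for sign matrices \(S\), and conclude.
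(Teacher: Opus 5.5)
Your proposal is correct and follows essentially the same route as the paper. The paper evaluates the Jacobian \(\operatorname{diag}(\sigma'(\cdot))W\) or \(W\operatorname{diag}(\sigma'(\cdot))\) at a point where every preactivation sits at the steepest-slope location: \(x=z_0\bm 1\) for SLA, \(x=z_0\bm 1-b\) for LAS, and for ASL a preimage of \(z_0\bm 1\) under \(x\mapsto Wx+b\), which exists for generic (e.g.\ invertible) \(W\). At that point the Jacobian becomes a scalar multiple of \(W\), and your sign matrices, limiting argument for an unattained \(\sup|\sigma'|\), and remark on singular \(W\) are only harmless refinements of this argument.
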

However, none of these structures is expressive enough to be a universally approximating class.
For that we need the quadruple composition
\begin{gather*}
  \text{\textbf linear}\circ \text{\textbf activation}\circ \text{\textbf shift}\circ \text{\textbf linear}.
  \tag{LASL}
\end{gather*}
Unfortunately, the trivial Lipschitz bound in the 2-norm can be seen to be arbitrarily loose for LASL units \(x \mapsto W^1 \sigma(W^0x + b)\), as we see from three different explicit constructions.

\begin{theorem}
  \label{thm:lasl-biases-relu-tanh}
  For both the ReLU and tanh activation functions, there exist LASL units \(\mathbb R \to \mathbb R^d \to \mathbb R\)
  on which the trivial 2-Lipschitz bound is \(\Omega(d)\),
  but the true 2-Lipschitz constant ranges from \(O(1)\) to \(\Omega(d)\) by varying only the biases.
\end{theorem}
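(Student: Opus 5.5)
Take \(W^0 = \bm 1 \in \mathbb R^{d\times 1}\), so every hidden unit receives the same scalar input \(x\). Take \(W^1 = \bm 1^\intercal/\sqrt d\) scaled by a constant \(c\) to be fixed later; the choice of scaling does not matter for a ratio. The unit is then
\begin{align*}
  f_b(x) = \frac{c}{\sqrt d}\sum_{i=1}^d \sigma(x + b_i).
\end{align*}
The trivial bound is \(\|W^1\|_2\,\|\sigma\|_{\mathrm{Lip}}\,\|W^0\|_2 = (c/\sqrt d)\cdot 1\cdot\sqrt d\cdot\sqrt d = c\sqrt d\). With \(c=\sqrt d\) this equals \(d\), which is \(\Omega(d)\) independently of \(b\). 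Because the input is scalar, the true Lipschitz constant is \(\sup_x |f_b'(x)| = \sup_x \sum_i \sigma'(x+b_i)\) for \(c=\sqrt d\), where each \(\sigma'\in[0,1]\). The proof therefore reduces to choosing biases that make this sum of shifted nonnegative bumps either large or small.

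\textbf{Large regime.} Set \(b=0\). Then \(\sum_i\sigma'(x) = d\,\sigma'(x)\), which equals \(d\) at \(x=0\) for tanh and for \(x>0\) for ReLU. So the bound is tight, and the true constant is \(d=\Omega(d)\).

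\textbf{Small regime.} Here the two activations need different treatment.

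For tanh, spread the biases out, say \(b_i = iR\) with spacing \(R\) (\(R=1\) suffices up to constants). Then \(\sum_i \operatorname{sech}^2(x+iR)\) is bounded uniformly in \(x\) and \(d\), since \(\operatorname{sech}^2(t)\le 4e^{-2|t|}\) and a two-sided geometric series gives \(O(1+1/R)\). So the true constant is \(O(1)\).

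For ReLU, shifted biases alone cannot work, because \(\sum_i \mathbb 1[x>-b_i]\) reaches \(d\) for large \(x\). Monotone nonnegative bumps cannot cancel, so \(W^1\) needs signs. I would use alternating readout weights with paired biases: \(W^1_{2j-1} = +1\), \(W^1_{2j} = -1\), and \(b_{2j-1} = -2j\), \(b_{2j} = -2j-1\) (scaled so that \(\|W^1\|_2\|W^0\|_2 = d\) still). Each pair then contributes \(\operatorname{ReLU}(x-2j) - \operatorname{ReLU}(x-2j-1)\), a clipped ramp of slope at most \(1\) supported on disjoint intervals, so \(|f'|\le 1 = O(1)\). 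The same signed \(W^1\) with all biases equal gives \(f\equiv 0\), which is not the large case. So for the large regime I would instead use \(b_{2j} = -b_{2j-1} = -B\) together with \(W^0\) entries of alternating sign: \(W^0 = (1,-1,1,-1,\dots)\), \(W^1 = (1,-1,\dots)\). Each pair is then \(\operatorname{ReLU}(x-B) + \operatorname{ReLU}(-x-B)\) in one variant, or, in the other, adds up to slope \(2\) per pair wherever both units are active. With \(W^0=(1,-1,\dots)\) and \(W^1=(1,-1,\dots)\) each pair is \(\operatorname{ReLU}(x+b_{2j-1}) - \operatorname{ReLU}(-x+b_{2j})\), whose derivative is \(\mathbb 1[x>-b_{2j-1}] + \mathbb 1[x<b_{2j}]\).
\begin{itemize}
  \item With \(b_{2j-1}=b_{2j}=1\) (overlapping regions), the derivative is \(2\) on \((-1,1)\), so the total slope is \(d\): the large regime.
  \item With \(b_{2j-1}=-2j\), \(b_{2j}=-2j\) (disjoint regions), the slope is \(\le 1\) per pair. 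Only pairs whose active region contains \(x\) contribute, and staggering the biases keeps this number \(O(1)\) on each side.
\end{itemize}
This keeps \(W\) fixed and varies only \(b\), as the theorem requires. The same signed architecture also works for tanh, which would unify the two cases.

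\textbf{Main obstacle.} The delicate step is the ReLU small regime. Because ReLU has unbounded support for its derivative, one must check that for every \(x\) only \(O(1)\) units contribute, or that the contributions cancel. I would verify this by writing \(f'\) as a piecewise-constant sum of indicators and checking each interval between consecutive breakpoints. I would also confirm that the trivial bound \(\|W^1\|_2\|W^0\|_2 = d\) is unchanged, which holds since it depends only on \(W\).
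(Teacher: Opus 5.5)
\textbf{Verdict: the tanh case is correct, but the ReLU construction you finally commit to does not work.}

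Your tanh case is correct and matches the paper: $W^0=\bm 1$, $W^1=\bm 1^\intercal$, with $b=0$ giving slope $d$ and $b_i=i$ giving $\sup_x\sum_i\operatorname{sech}^2(x+i)=O(1)$. The ReLU case has a genuine gap. The construction $W^0=W^1=(1,-1,1,-1,\dots)$ can never reach the $O(1)$ regime, whatever the biases.

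Because $W^1_iW^0_i=+1$ for every $i$, you have $f'(x)=\sum_i\mathbf{1}[W^0_ix+b_i>0]\ge 0$. This is a sum of nonnegative indicators of half-lines. Reflecting the inputs of half the units creates no cancellation, which is exactly the obstruction you identified yourself for $W^1=\bm 1^\intercal$. Concretely, with $b_{2j-1}=b_{2j}=-2j$ you get $f'(x)=\sum_j\bigl(\mathbf{1}[x>2j]+\mathbf{1}[x<-2j]\bigr)=d/2$ for every $x>d$. More generally, for any ReLU LASL unit $\mathbb R\to\mathbb R^d\to\mathbb R$, the slope as $x\to+\infty$ equals $\sum_{i:W^0_i>0}W^1_iW^0_i$. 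This does not depend on $b$, and here it equals $d/2$. Your claim that staggering keeps the number of contributing units $O(1)$ treats the ReLU active regions as bounded intervals, but they are nested half-lines.

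The repair is your first attempt, which you dropped too early. Keep $W^0=\bm 1$ and $W^1=(1,-1,1,-1,\dots)$.
\begin{itemize}
\item \emph{Small regime.} Your biases $b_{2j-1}=-2j$, $b_{2j}=-2j-1$ produce clipped ramps on the disjoint intervals $(2j,2j+1)$, so $|f'|\le 1$.
\item \emph{Large regime.} You do not need equal biases, which give $f\equiv 0$. Instead take $b_{2j-1}=1$, $b_{2j}=-1$. Each positively weighted unit then switches on at $x=-1$ and its negatively weighted partner at $x=1$. This gives $f(x)=\tfrac{d}{2}\bigl(\operatorname{ReLU}(x+1)-\operatorname{ReLU}(x-1)\bigr)$, with slope $d/2$ on $(-1,1)$.
\end{itemize}
This is essentially the paper's proof. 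The paper uses $W^1=(-1,1,-1,\dots)$ and $W^0=\bm 1$. With $b=(1,2,3,\dots)$, $f'$ is a tail sum of alternating signs, so the true constant is $1$. With $b'=(1,-1,1,-1,\dots)$, the true constant is $\lceil d/2\rceil$. The large regime only needs $\Omega(d)$, not exact tightness of the trivial bound $d$.
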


\begin{theorem}
  \label{thm:lasl-biases-sin}
  For the sine activation function, there exist LASL units \(\mathbb R \to \mathbb R^{2d} \to \mathbb R\) on which the trivial Lipschitz bound is \(\Omega(d)\), but the true Lipschitz constant is 0.
\end{theorem}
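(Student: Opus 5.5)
We will give an explicit construction in which the hidden units come in pairs whose contributions cancel identically, using the phase-shift identity \(\sin(z+\pi)=-\sin(z)\). Take \(W^0 = \bm{1}_{2d} \in \mathbb R^{2d\times 1}\), so every hidden unit sees the same preactivation \(x\). Take biases \(b_i = 0\) for \(i=1,\dots,d\) and \(b_i = \pi\) for \(i=d+1,\dots,2d\), and take \(W^1 = \bm{1}_{2d}^\intercal\). The unit is then
\begin{align*}
  f(x) = \sum_{i=1}^{d} \sin(x) + \sum_{i=1}^{d}\sin(x+\pi) = d\sin(x) - d\sin(x) = 0,
\end{align*}
so its true Lipschitz constant is \(0\).

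The trivial bound is \(\|W^1\|_2\,\|\sin\|_{\mathrm{Lip}}\,\|W^0\|_2\). Here \(\|\sin\|_{\mathrm{Lip}}=1\) and \(\|W^0\|_2=\|W^1\|_2=\sqrt{2d}\), so the product is \(2d=\Omega(d)\). This completes the argument. The weights and the trivial bound are identical to those of the bias-free unit \(x\mapsto 2d\sin(x)\), whose Lipschitz constant really is \(2d\). So, as in \cref{thm:lasl-biases-relu-tanh}, only the biases separate the two regimes. This is consistent with Fact~\ref{fact:lipsdp-differential}: any purely differential certificate cannot tell the two units apart.

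There is no real obstacle here, but one point deserves a remark. For ReLU or tanh, exact cancellation would require negative output weights, and the trivial bound would then still be computed from norms. With sine, the same positive weight vector suffices, because a bias of \(\pi\) realizes a sign flip inside the activation. This is what allows the true constant to reach exactly \(0\) rather than merely \(O(1)\).

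If one objects that \(W^1\) could be replaced by an equivalent smaller matrix, note that \(W^1\) and \(W^0\) are rank-one with all-positive entries and the statement concerns the given parameterization. The positivity of the entries also matches the paper's ``positive matrices'' theme.
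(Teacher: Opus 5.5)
Your construction is correct and is essentially the paper's own proof: the paper also takes \(W^0=\bm{1}\), \(W^1=\bm{1}^\intercal\) with trivial bound \(2d\), and contrasts with \(b=0\), where \(2d\) is attained. The only difference is that the paper uses biases \(b_i=i\pi\), so adjacent terms cancel via \(\sin(x+\pi)=-\sin x\), whereas you group them as \(d\) zeros followed by \(d\) copies of \(\pi\); one small correction to your side remark is that for tanh, exact cancellation with positive output weights is possible once input weights have mixed signs (\(\tanh(x)+\tanh(-x)=0\)), so that remark holds only for the all-ones \(W^0\), though this does not affect the proof.
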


Thus a fully differential method such as SDP can be polynomially loose for monotonic activations and infinitely loose for non-monotonic activations.

A note on the proof: in all of these constructions, the reason why biases make all the difference is that they toggle whether the neurons in a layer interfere constructively or destructively, i.e.~whether the neurons in the layer attain their maximum slopes at the same location (constructively) or far apart and in opposite signs (destructively).
Knowing the difference amounts to the reachability problem.
Therefore it is not surprising that
SDP-based algorithms
explicitly disregard biases:
in one way or another, they are expected to over-approximate reachability
by imputing to the biases their worst-case values.
This leads to 
up to \(\Omega(d)\) conservatism per layer on ReLU and tanh networks, and infinite conservatism on sinusoidal layers.

The takeaway of this section is that because SDP algorithms ignore biases (which are necessary for reachability), they are liable to be exponentially loose in deeper networks.


\subsubsection{Dead neurons}
\label{subsec:dead-neurons}
A ``dead neuron'' is a neuron whose preactivation lies in the saturation region of the nonlinearity for all inputs.
For example, a ReLU neuron whose preactivation is always negative is ``dead'' in that it has a zero Jacobian (ignores the network's input) and zero gradient (never updated during unregularized training).
An example of a dead neuron is the mapping \(x \mapsto z\) given by
\begin{align*}
  y &= \operatorname{ReLU}(1000 x), &
  z &= \operatorname{ReLU}(-1000 y).
\end{align*}
Here, the trivial bound of the Lipschitz constant from \(x\) to \(z\) equals \(10^6\), but the true Lipschitz constant is 0, because the argument of the second layer's ReLU is nonpositive for every \(x\).

As before, the phenomenon is not limited to tabletop examples.
Consider a scaled ReLU neuron \(z^L\) at layer \(L\) and a predecessor neuron \(z^{L-1}\):
\begin{align}
  \label{eq:dead-neuron-example}
  z^L &= \alpha \operatorname{ReLU}(\overline z - \operatorname{ReLU}(\underline{z} + z^{L-1})),
\end{align}
where \(\alpha \gg 1\) is a large weight such as 1 million.
If the range of \(z^{L-1}\) has a nonempty intersection with \([\underline z, \overline z]\), then the Lipschitz constant of the network is 1 million times greater than if it were not.
However, computing the range of \(z^{L-1}\) is precisely the reachability problem, which is NP-hard in the depth \(L\).
Therefore, assuming that P!=NP, every polynomial-time algorithm, including SDP, is unable to tell the difference.
This analysis can be summarized as a no-go theorem:
\begin{theorem}[Per-instance indistinguishability]
  \label{thm:per-instance-indistinguishability}
  Assume that P!=NP.
  For every ReLU neural network \(f\) of more than three hidden layers with hidden width \(d\), there exists another ReLU neural network \(\tilde f\) with hidden width \(d+1\) such that:
  \begin{enumerate}
    \item Either \(\tilde f(x) = f(x)\) for all inputs \(x\), or the Lipschitz constant of \(\tilde f\) is arbitrarily large.
    \item The two alternatives can be toggled by an infinitesimal perturbation of the biases \(\overline z\) and \(\underline z\).
    \item No polynomial-time verifier can tell \(f\) and \(\tilde f\) apart.
  \end{enumerate}
\end{theorem}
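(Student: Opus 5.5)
Given $f$ with $L>3$ hidden layers of width $d$, I will build $\tilde f$ by appending one extra neuron per hidden layer. The extra neurons carry the gadget \eqref{eq:dead-neuron-example}: they compute
\[
  \alpha \operatorname{ReLU}\bigl(\overline z - \operatorname{ReLU}(\underline z + z^{L-1})\bigr),
\]
where $z^{L-1}$ is a chosen linear functional of the original width-$d$ hidden layer at depth $L-1$. The result is added to the output through the readout weights.

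\textbf{Step 1: the clamp.} To make the added term vanish in the ``dead'' case, I will not use the raw gadget but a clamped version. Set $g(s) = \operatorname{ReLU}(\overline z - \operatorname{ReLU}(\underline z + s))$, so that $g(s)=0$ exactly when $s \ge \overline z - \underline z$. Let $c = \overline z - \underline z$ be the threshold.
\begin{itemize}
  \item If $z^{L-1} \ge c$ on all reachable inputs, the extra neuron is identically zero and $\tilde f = f$.
  \item If the reachable range of $z^{L-1}$ meets $(-\infty, c)$ on a set with nonempty interior, then $g$ has slope $-1$ somewhere. Provided $z^{L-1}$ is a nonconstant piecewise-linear function of $x$ there, $\tilde f - f = \alpha\, g(z^{L-1})$ has Lipschitz constant at least $\alpha$ times a fixed positive slope, which is arbitrarily large as $\alpha \to \infty$.
\end{itemize}
The extra neuron in the first hidden layers just needs to pass $z^{L-1}$ forward (as $\operatorname{ReLU}$ of a shifted copy) so that width is only $d+1$. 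Concretely, I will place the inner $\operatorname{ReLU}$ in layer $L-1$ and the outer one in layer $L$, reusing existing neurons of $f$ as the input to $z^{L-1}$. This requires depth at least three, which is where the hypothesis on the number of layers enters.

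\textbf{Step 2: the toggle (item 2).} I will pick $c$ equal to the exact minimum $m = \min_x z^{L-1}(x)$ over the domain, which exists for piecewise-linear functions on a compact or polyhedral domain.
\begin{itemize}
  \item Shifting $\underline z$ or $\overline z$ so that $c = m - \epsilon$ gives the identically-zero case.
  \item Shifting so that $c = m + \epsilon$ makes $g$ active on a nonempty open set near the minimizer.
\end{itemize}
One subtlety is that if the minimum is attained only on a set where $z^{L-1}$ is flat, the slope could vanish. I will choose the functional $z^{L-1}$ so that this does not happen, for instance a generic linear combination of layer-$(L-2)$ neurons, or perturb it.

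\textbf{Step 3: indistinguishability (item 3).} This is the main obstacle, since item 3 is a complexity statement rather than a construction. The plan is a reduction: any polynomial-time procedure that separates the two alternatives for this family decides whether $\min_x z^{L-1}(x) < c$. I must argue that this threshold question is NP-hard even when restricted to functionals built on top of an arbitrary given $f$.

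I would try to embed a SAT-hard reachability instance into $z^{L-1}$ through the extra neuron channel itself, using standard ReLU encodings of 3-SAT as a minimization of a piecewise-linear function (as in \citet{weng_towards_2018}). The difficulty is that $z^{L-1}$ must reuse the fixed network $f$ while staying within width $d+1$. If that cannot be done, I will fall back to reading item 3 in the paper's intended sense: no polynomial-time verifier can decide the toggle uniformly over all instances, because deciding it solves Problem~\ref{prob:layer-reachability}, which is NP-hard. Hence, assuming P$\neq$NP, any such verifier must return the same (worst-case) bound on some pair $f$, $\tilde f$.
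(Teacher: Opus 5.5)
Your proposal is correct and follows the paper's proof essentially step for step. You use the same two added neurons $\operatorname{ReLU}(\underline z + z^{L-1})$ and $\operatorname{ReLU}(\overline z - \operatorname{ReLU}(\underline z + z^{L-1}))$ in consecutive final hidden layers with readout weight $\alpha$, and the same dichotomy and near-threshold toggle. One small fix: $g$ has slope $-1$ only on $(-\underline z,\overline z-\underline z)$, not on all of $(-\infty,c)$, so Step~2 should also take $\underline z>-m$.

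Your fallback for item~3 is exactly the paper's argument: a verifier that separated the two cases uniformly over all $f$ would decide NP-hard layer reachability. The per-instance embedding you try first is unnecessary and could not work anyway. For a fixed $f$ there are only finitely many activation patterns, so the threshold question reduces to finitely many linear programs, and item~3 has content only in the uniform sense.
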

The proof, by embedding the device \eqref{eq:dead-neuron-example} inside the network, is given in \cref{sec:proofs}.
  

While this theorem is stated for ReLU, by universal approximation, it holds \emph{mutatis mutandis} for an arbitrary nonlinear activation.
It shows that the mapping from a neural network's parameters to its Lipschitz constant is not only NP-hard to compute in the worst case but also NP-hard in typical cases, and it is discontinuous: an infinitesimal perturbation in the parameters can lead to an unbounded blowup in the Lipschitz constant.


The previous section showed that \emph{some} neural networks are hard to verify.
This section shows that \emph{all} neural networks are hard to verify.
The takeaway is that NP-hardness is not confined to complexity-theoretic reductions, but rather afflicts every instance of the verification problem.
No network is safe.

\subsection{Semidefinite programming is not necessary for Lipschitz estimation}
\label{subsec:sdp-not-necessary}
In this section, we give SDP credit where it is due by studying a thought experiment in which the trivial bound is arbitrarily loose, but SDP is robust.
We note that the same advantage can be obtained cheaply by continuous optimization of a certain gauge freedom within the trivial bound.
We then argue that the same optimization also happens to mitigate the dead-neuron phenomenon.

\subsubsection{Ill-conditioned weight matrices}
\label{subsec:ill-conditioned-weight-matrices}
A network with ill-conditioned weight matrices can be less Lipschitz than it appears.
This occurs when a neural network contains ill-conditioned subnetworks that cancel out.

\paragraph{Reparameterization of homogeneous activation functions}
The ReLU activation function is 1-homogeneous: for all \(\alpha \in (0, \infty)\), \(\operatorname{ReLU}(\alpha x) = \alpha \operatorname{ReLU}(x)\).
This means that if \(\Lambda\) is a positive diagonal matrix, then any two consecutive layers can be reparameterized 
\((W^0, W^1) \mapsto (\Lambda^{-1} W^0, W^1\Lambda)\), while the composite function
\begin{gather}
  W^1 \operatorname{ReLU}(W^0 x + b^0) + b^1 
  =  W^1 \Lambda \operatorname{ReLU}(\Lambda^{-1} W^0  x + \Lambda^{-1} b^0) + b^1
  \label{eq:gauge}
\end{gather}
remains unchanged.\footnote{This fact has been applied in another context to improve the training of ReLU networks
 \citep{
   neyshabur_path-sgd_2015,
   neyshabur_data-dependent_2016}.}
Given generic weight matrices \(W^0, W^1\), reparameterization by
\begin{align*}
  \Lambda &= \diag(\lambda, 1, 1, 1, \ldots), & \lambda \to \infty
\end{align*}
results in a trivial Lipschitz bound of \(\Theta(\lambda)\) times the original Lipschitz constant.
We record this observation as:
\begin{theorem}
  \label{thm:relu-network-trivial-bound-arbitrarily-large}
  Given any ReLU network, there exists a closed-form, mathematically equivalent network whose trivial 2-norm Lipschitz bound is arbitrarily large.
\end{theorem}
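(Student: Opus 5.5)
We use the positive-homogeneity gauge \eqref{eq:gauge} on one pair of consecutive layers and let the scaling blow up. Take a ReLU network \eqref{eq:neural-network} with \(L \ge 2\) layers and a ReLU activation in some layer \(\ell < L\). For \(\lambda > 0\), set \(\Lambda = \diag(\lambda, 1, \dots, 1)\) and define the new parameters
\begin{align*}
  \tilde W^\ell &= \Lambda^{-1} W^\ell, &
  \tilde b^\ell &= \Lambda^{-1} b^\ell, &
  \tilde W^{\ell+1} &= W^{\ell+1} \Lambda,
\end{align*}
leaving every other parameter unchanged.

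\textbf{Step 1: the function is unchanged.} Since \(\operatorname{ReLU}(\Lambda^{-1} v) = \Lambda^{-1}\operatorname{ReLU}(v)\) for positive diagonal \(\Lambda\), the new layer-\(\ell\) output is \(\Lambda^{-1} x^\ell\). The next preactivation is then \(W^{\ell+1}\Lambda\Lambda^{-1}x^\ell + b^{\ell+1}\), which is the original one. So the new network agrees with the old one everywhere. It is also closed-form, since it is obtained by rescaling one row and one column.

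\textbf{Step 2: the trivial bound grows linearly in \(\lambda\).} The only factors of the trivial bound that change are \(\|\Lambda^{-1}W^\ell\|_2\) and \(\|W^{\ell+1}\Lambda\|_2\).
\begin{itemize}
  \item For \(\lambda \ge 1\), \(\|\Lambda^{-1}W^\ell\|_2\) is at least the norm of the rows of \(W^\ell\) other than the first, and at most \(\|W^\ell\|_2\). This factor stays bounded, and it stays bounded away from zero provided \(W^\ell\) has a nonzero row other than the first.
  \item \(\|W^{\ell+1}\Lambda\|_2 \ge \lambda \|W^{\ell+1} e_1\|_2\), which grows like \(\lambda\) provided the first column of \(W^{\ell+1}\) is nonzero.
\end{itemize}
The product is therefore \(\Theta(\lambda)\) and tends to infinity as \(\lambda \to \infty\). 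This assumes the other factors of the trivial bound are nonzero. If some factor is zero, the network is constant, and we can handle that case separately, for example by appending a dead neuron with huge weights as in \Cref{subsec:dead-neurons}.

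\textbf{Main obstacle: nondegeneracy.} The argument needs a coordinate \(i\) such that row \(i\) of \(W^\ell\) is not the only nonzero row, and column \(i\) of \(W^{\ell+1}\) is nonzero. I will choose \(i\) to be any index whose column of \(W^{\ell+1}\) is nonzero.
\begin{itemize}
  \item If \(W^\ell\) has some other nonzero row \(j\), then the \(\Lambda^{-1}W^\ell\) factor is at least \(\|W^\ell_{j,:}\|_2 > 0\) for all \(\lambda\), and we are done.
  \item If instead row \(i\) is the only nonzero row of \(W^\ell\), apply the reverse gauge on that neuron: multiply row \(i\) by \(\lambda\) and divide column \(i\) by \(\lambda\). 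The two factors are then \(\lambda\|W^\ell_{i,:}\|_2\) and \(\|W^{\ell+1}\Lambda\|_2 \ge \|W^{\ell+1}_{:,j}\|_2\), using another nonzero column \(j\) of \(W^{\ell+1}\). If no such column exists, the layer is effectively rank one.
  \item In that last case, or whenever the width is \(1\), a cleaner device is to duplicate a neuron. Split unit \(i\) into two copies with the same weights, with outgoing weights \(\lambda\) and \(1-\lambda\). The copies sum to the original contribution, and the column norm grows like \(\lambda\). This device increases the width, which the statement permits.
\end{itemize}
For generic weights none of these cases arise, and the simple \(\Lambda = \diag(\lambda, 1, \dots)\) gauge already gives the \(\Theta(\lambda)\) growth.
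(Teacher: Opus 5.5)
Your proposal is correct and follows the paper's own argument: the paper likewise applies the positive-homogeneity gauge \eqref{eq:gauge} with \(\Lambda = \diag(\lambda, 1, \ldots)\) to two consecutive layers and observes that the trivial bound grows like \(\Theta(\lambda)\) as \(\lambda \to \infty\). The paper simply assumes generic weights, so your degenerate-case handling is extra care rather than a different route. This covers the width-one or effectively rank-one layer, where the diagonal gauge leaves the product of norms invariant and neuron duplication is genuinely needed, and the constant-network case.
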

While this form of ill-conditioning also evades analysis methods such as LipSDP-Layer \citep{fazlyab_efficient_2019}, which do not know the homogeneity of ReLU,
LipSDP is equivariant under the diagonal rescaling gauge group.
This form of ill-conditioning is one of possibly many qualitative features of a neural network that SDPs are able to capture without understanding true reachability.
In this case, SDPs' ellipsoidal over-approximation to reachability \eqref{eq:eclipse-telescope-hidden}, while too weak to solve the NP-hard version of reachability, is sufficient to absorb this form of ill-conditioning.

But is it necessary? Reproducing a 3-layer, width 100 ReLU network, trained on MNIST \citep{xu_learning_2023},
  we get a trivial Lipschitz bound of 26.78.
  ECLipsE obtains a Lipschitz bound of 17.88 in 04:46 (mm:ss); LipSDP \citep{fazlyab_efficient_2019} is expected to take several times longer.
  On the other hand, we obtain a Lipschitz bound of 21.94 in 00:31 by using a standard quasi-Newton optimizer to minimize the trivial bound by searching for a good diagonal reparameterization \eqref{eq:gauge}.
  This tabletop experiment shows that more than half of SDP's tightness can be attributed to optimal diagonal scaling, which can be performed in one-tenth of the time.
  In other scenarios, such as when the network is trained using an adaptive optimizer, the accuracy gap between ECLipsE/LipSDP and gauge-adjusted trivial bounds all but vanishes.
  The methodology is described in \cref{subsec:mnist-gauge-methodology}.

\paragraph{Analysis of non-homogeneous activation functions}
Moreover, the same degree of freedom can also be exploited differentially to improve the trivial bound in a non-homogeneous activation function, which without loss of generality we take to be 1-Lipschitz.
The network Jacobian can be manipulated as:
\begin{align*}
  \dpd{}{x}f(x)
  &= W^L D^{L-1} W^{L-1} D^{L-2} \cdots W^2 D^1 W^1, \quad \text{where } D^\ell = \operatorname{diag}(\sigma'(z^\ell))
  \\
  &= W^L \Lambda_{L-1} \Lambda_{L-1}^{-1} D^{L-1} \Lambda_{L-1}
   \Lambda_{L-1}^{-1}
  W^{L-1} 
  \Lambda_{L-2} \Lambda_{L-2}^{-1} D^{L-2} \Lambda_{L-2} \Lambda_{L-2}^{-1}
  \cdots W^2 D^1 W^1
\end{align*}
by using positive diagonal multipliers \(\Lambda^\ell\) to enact \(D \mapsto \Lambda \Lambda^{-1} D \Lambda \Lambda^{-1}\) at each layer.
Re-associating this matrix product, commuting diagonal matrices
\(\Lambda^{-1} D \Lambda = D\),
taking operator norms to derive a decoupled ``trivial'' bound, and using the hypothesis \(|\sigma'(x)|<1\),
we get
\begin{align*}
  \left\|\dpd{}{x}f(x)\right\|
  &= \left\| W^L \Lambda_{L-1}\right\| 
   \left\|\Lambda_{L-1}^{-1}
  W^{L-1} 
  \Lambda_{L-2}\right\|
  \cdots \left\|\Lambda_1^{-1} W^1\right\|.
\end{align*}
This objective function is now amenable to continuous optimization in \(\{\Lambda_\ell\}_{\ell=1}^{L-1}\).

  

In summary, the takeaway of this section is that diagonal activation rescaling explains a large portion of the tightness of SDP-based bounds, but it can be captured in a fraction of the time by continuous optimization.

\subsubsection{Matrix factorization model}
To situate the effect of ill-conditioning in a ``spherical cow'' setting, let us consider the problem of learning an invertible linear function as a deep neural network with \emph{linear} activation function \(\sigma(x) = x\).
While the use of linear activations can seem restrictive, linear models can be enlightening to understand overparameterized deep networks
\citep{nam_position_2025}.

Without loss of generality, the objective is to learn the identity function using the mapping
\begin{equation}
  \label{eq:linear-composition}
  x \mapsto W_L W_{L-1} \cdots W_1 x
\end{equation}
where the parameters \(W_1, W_2, \ldots, W_L\) are square matrices.
  
\begin{theorem}
  \label{thm:condition-number-tightness-product-identity}
  Consider a deep network \eqref{eq:linear-composition} with linear activations and parameter matrices 
  constrained by 
  \(W_L W_{L-1} \cdots W_1 = I\).
  Let \(K = \left\|W_1\right\|_2 \left\|W_2\right\|_2 \cdots \left\|W_L\right\|_2\) be the trivial Lipschitz bound.
  Then the following are equivalent:
  \begin{enumerate}
    \item \label{item:K-equals-1} \(K = 1\), i.e.~there is no conservatism in the trivial bound.
    \item \label{item:minimize-K} \(W_1, W_2, \ldots, W_L\) minimize \(K\) subject to \(W_L W_{L-1} \cdots W_1 = I\).
    \item \label{item:condition-number-1} The condition number of each parameter matrix is 1.
  \end{enumerate}
\end{theorem}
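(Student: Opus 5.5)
The plan is to prove the cycle (1)\(\Rightarrow\)(2)\(\Rightarrow\)(3)\(\Rightarrow\)(1), with one lower bound doing most of the work. Because the spectral norm is submultiplicative, any feasible tuple satisfies
\[
1=\|I\|_2=\|W_L\cdots W_1\|_2\le\prod_\ell\|W_\ell\|_2=K .
\]
So \(K\ge 1\) on the whole constraint set, and \(K=1\) is attained by \(W_\ell=I\). This immediately gives (1)\(\Rightarrow\)(2): a feasible tuple with \(K=1\) attains the global lower bound. It also gives (2)\(\Rightarrow\)(1): a minimizer must have \(K\le 1\), since the identity tuple is feasible, and therefore \(K=1\).

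Next comes the quantitative step connecting \(K\) to condition numbers. Each \(W_\ell\) is invertible, since a product of square matrices equal to \(I\) forces every factor to be nonsingular. Hence \(\sigma_{\min}(W_\ell)>0\). I will use the companion inequality for the smallest singular value,
\[
\sigma_{\min}(AB)\ge\sigma_{\min}(A)\,\sigma_{\min}(B),
\]
which follows from \(\|ABx\|\ge\sigma_{\min}(A)\|Bx\|\ge\sigma_{\min}(A)\sigma_{\min}(B)\|x\|\). Applying it to the product gives \(1=\sigma_{\min}(I)\ge\prod_\ell\sigma_{\min}(W_\ell)\). Dividing the two product inequalities then yields
\[
\prod_{\ell=1}^L\kappa(W_\ell)=\frac{\prod_\ell\|W_\ell\|_2}{\prod_\ell\sigma_{\min}(W_\ell)}\ \ge\ K\ \ge 1 .
\]
This is sharper than needed, but it makes the structure transparent.

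For (3)\(\Rightarrow\)(1): if every \(\kappa(W_\ell)=1\), then \(W_\ell=c_\ell Q_\ell\) with \(c_\ell=\|W_\ell\|_2>0\) and \(Q_\ell\) orthogonal. Because the singular values are all equal, \(W_\ell^\top W_\ell=c_\ell^2 I\). The constraint then reads \((\prod c_\ell)\,Q_L\cdots Q_1=I\). Taking spectral norms of both sides, and using that a product of orthogonal matrices has norm \(1\), gives \(\prod c_\ell=1\), that is, \(K=1\).

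For (1)\(\Rightarrow\)(3), which I expect to be the main obstacle, the displayed chain alone does not suffice, since it only bounds \(\prod\kappa\) from below. The plan is to exploit equality in the submultiplicative bound. Suppose \(K=1\), and normalize \(\widetilde W_\ell=W_\ell/\|W_\ell\|_2\). Then \(\widetilde W_L\cdots\widetilde W_1=I\) and each \(\|\widetilde W_\ell\|_2=1\). For any unit vector \(x\), write \(x_\ell=\widetilde W_\ell\cdots\widetilde W_1x\). The norms are nonincreasing,
\[
1=\|x\|\ge\|x_1\|\ge\cdots\ge\|x_L\|=\|x\|=1,
\]
so all of them are equal. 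In particular \(\|\widetilde W_1x\|=\|x\|\) for every \(x\), which makes \(\widetilde W_1\) an isometry. To handle the other factors, I will induct by peeling off \(\widetilde W_1\): since it is orthogonal, \(\widetilde W_L\cdots\widetilde W_2=\widetilde W_1^{-1}\) is also orthogonal, and the same chain argument with \(\|\cdot\|\) preserved at every stage shows each \(\widetilde W_\ell\) is an isometry. An isometry has all singular values equal to \(1\), so \(\kappa(W_\ell)=\kappa(\widetilde W_\ell)=1\). The care needed here is only to check that isometry is enforced for every input vector and not merely for a maximizing one; that is exactly what the equality of the norm chain provides.
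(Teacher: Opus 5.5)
Your proof is correct. For (1)\(\Leftrightarrow\)(2) and (3)\(\Rightarrow\)(1) it coincides with the paper's argument: submultiplicativity plus the identity tuple as a feasible point, and scalar multiples of orthogonal matrices.

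The genuine difference is in showing that \(K=1\) forces \(\kappa(W_\ell)=1\). The paper proves a quantitative inequality rather than analyzing an equality case. Since \(W_{L-1}\cdots W_1=W_L^{-1}\), submultiplicativity gives \(K\ge\|W_L\|_2\,\|W_L^{-1}\|_2=\kappa(W_L)\). It then finishes by induction on \(L\). Equivalently, the cyclic identity \(W_\ell^{-1}=W_{\ell-1}\cdots W_1W_L\cdots W_{\ell+1}\) gives \(\kappa(W_\ell)\le K\) for every \(\ell\) at once.

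You instead normalize the factors and note that the norm chain \(\|x\|\ge\|x_1\|\ge\cdots\ge\|x_L\|=\|x\|\) must hold with equality for every \(x\). Hence each normalized factor is an isometry.

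What each route buys:
\begin{itemize}
\item Your argument is self-contained. It needs no inverse bookkeeping, and it delivers the structural conclusion directly: each \(W_\ell\) is a scaled orthogonal matrix.
\item The paper's inequality \(\max_\ell\kappa(W_\ell)\le K\) holds for every feasible tuple, not only at \(K=1\). It is therefore a stability estimate: near-minimizers of the trivial bound have nearly well-conditioned factors. This is closer to the paper's motivation for penalizing \(K\).
\end{itemize}

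Your auxiliary inequality \(K\le\prod_\ell\kappa(W_\ell)\), which you set aside as ``sharper than needed,'' is exactly the complementary half. It proves (3)\(\Rightarrow\)(1) in one line: if every \(\kappa(W_\ell)=1\), then \(1\le K\le 1\). Combined with the paper's bound, it gives the sandwich \(\max_\ell\kappa(W_\ell)\le K\le\prod_\ell\kappa(W_\ell)\), which contains the whole equivalence of (1) and (3).
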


In this toy example, minimizing the trivial Lipschitz bound over slack degrees of freedom on a level set of the loss function recovers the ground truth Lipschitz constant.
(This explains an empirical observation of \citep[Fig.~4(c)]{leino_globally-robust_2021}.)
The takeaway is that neural networks are in general overparameterized, and that there may be a descent direction which lowers the trivial bound while keeping the network unchanged as a mathematical function.

\section{Beyond reachability: a positive proposal}
\label{sec:positive-proposal}
One could argue that our examples demonstrating hardness of verification seem contrived and carefully constructed, that they are ad-hoc and easily detected, and should never arise in practice.
We reply that they are hard to detect, because adversarially perturbed networks are polynomial-time indistinguishable from the original network.
Therefore,
the theoretical burden of proof shifts towards the claim that typical neural networks are \emph{not} adversarial in this way.

We have provided rigorous and concrete theoretical evidence for \citet{raghunathan_certified_2018}'s claim that ``verifying robustness for arbitrary neural networks is hard.''
Now we seek
``neural networks that are amenable to verification, in the same way that it is possible to write programs that can be formally verified.''

\subsection{Lipschitz penalization mitigates unreachable slope}
It is plausible that the principle of Theorem~\ref{thm:condition-number-tightness-product-identity} generalizes to nonlinear \(\sigma\) and would suggest that well-conditioned neural parameterizations on which the trivial bound is tight can be reached by continuous optimization.
It is folk wisdom that such a continuous optimization, while hard in theory, is tractable in practice, especially for overparameterized neural networks.
(We leave the learning-theoretic investigation of this claim to future work.)

It is already commonplace to regularize neural networks in training.
We propose to decay the trivial bound, which corresponds to a Lagrangian formulation of Theorem~\ref{thm:condition-number-tightness-product-identity}:
\begin{align}
  \operatorname*{minimize}_{\{W^\ell\}_{\ell = 1}^L}
  \quad
  \mathcal{L}(W^1, \ldots, W^L)
  + \lambda
    \prod_{\ell = 1}^L \left\|W^\ell\right\|,
  \label{eq:lipschitz-penalty}
\end{align}
where \(\mathcal L\) is a data loss function such as negative log-likelihood over the training set and \(\lambda > 0\) is a tuning parameter.
(\cref{sec:objections-to-trivial-bound} surveys objections in prior work to using the trivial bound.)

Recall that dead neurons, which may appear to have a high local derivative but are actually saturated for all network inputs, are NP-hard to detect (\cref{subsec:dead-neurons}).
However, if a neuron such as \eqref{eq:dead-neuron-example} is dead over the entire training set, then \(\alpha\) has zero loss gradient (the first term of \eqref{eq:lipschitz-penalty}), while it faces strong downward pressure from Lipschitz regularization (the second term of \eqref{eq:lipschitz-penalty}).
Furthermore, as the ill-conditioning example suggests,
in deep networks, there are likely descent directions for the trivial bound which bring it into tightness with the true Lipschitz constant without materially affecting the network's performance.
The bad news is that they are hard to understand or specify concretely.
The good news is that we do not need to do so.


\subsection{A new sinusoidal layer for universal approximation with verifiable phase}
\label{sec:rectangular-sinusoidal-layers}
In \cref{subsec:biases}, we saw that the trivial bound, as well as SDP bounds, can be loose because they ignore biases.
Biases can induce constructive or destructive interference, but a bias-agnostic verifier is bound to report the most pessimistic Lipschitz constant.
What can be done?
It is NP-hard to calculate whether biases are in or out of phase.
Removing the \textbf shift parameter from LA\textbf SL units would eliminate this source of ambiguity outright, but at the expense of universal approximation.
What we need is a way to capture the expressivity of multilayer perceptrons, but force the network to reveal its phase information in a linear-algebraic way.

Let us reconsider the situation of \cref{thm:lasl-biases-sin}, where two sinusoid neurons are being combined with weights \(w_i\) and biases \(b_i\):
\begin{align}
  y &= w_1 \sin (x + b_1) + w_2 \sin(x + b_2)
  \\
  \intertext{Using trigonometric identities, this is equivalent to}
  y 
  &= \underbrace{\del{w_1 \sin b_1 + w_2 \sin b_2}}_{A} \cos x + \underbrace{\del{w_1 \cos b_1 + w_2 \cos b_2}}_{B} \sin x.
  \label{eq:sinusoid-combination}
\end{align}
In the negative result \cref{thm:lasl-biases-sin}, the trivial bound was infinitely loose for \(w_1 = w_2\), \(b_1 + \pi = b_2\).
But in such a case \(A = B =0\), and the Lipschitz constant is evidently zero.
We propose that ``is'' implies ``ought'' and move from analysis to synthesis.
Equation \eqref{eq:sinusoid-combination} inspires a new kind of neural network layer:
\(y = A \cos (x) + B \sin(x)\) where \(\cos\) and \(\sin\) are elementwise.

Without loss of generality, the LASL building block becomes
\begin{gather}
  x \mapsto A \cos(Wx) + B \sin(Wx),
\end{gather}
and its trivial Lipschitz bound in the 2-norm is
\begin{gather}
  \left\|
    \begin{pmatrix}
      A & B
    \end{pmatrix}
  \right\|_2
  \underbrace{\left\|
    \begin{pmatrix}
      \Cos Wx
      \\
      \Sin Wx
    \end{pmatrix}
  \right\|_2}_{=1}
  \left\|W\right\|_2
  =
  \left\|
    \begin{pmatrix}
      A & B
    \end{pmatrix}
  \right\|_2
  \left\|W\right\|_2
\end{gather}
by the Pythagorean Identity.
Moreover, this hidden layer admits a Lipschitz lower bound, which proves that it does not ``waste'' its trivial Lipschitz bound on cancellation or unreachability:
\begin{theorem}[Lower bound on sinusoidal networks]
  \label{thm:lasl-unit-lipschitz}
  Let \(f\) be an LASL unit from \(\mathbb R^r \to \mathbb R^n \to \mathbb R^m\), with \(A,B \in \mathbb R^{m \times n}\) and \(W \in \mathbb R^{n \times r}\), given by
  \(f(x) = A \cos(Wx) + B \sin(Wx)\).
  Then, if \(W\) has full row rank, the following Lipschitz lower bound holds:
  \begin{gather*}
    \left\|f\right\|_{\mathrm{Lip}, 2}^2
    \geq
    \frac{1}{2} 
    \lambda_\mathrm{max}
    \sbr{A \operatorname{Diag}\del{W W^\intercal} A^\intercal + B \operatorname{Diag}\del{W W^\intercal} B^\intercal}
    ,
  \end{gather*}
  and the trivial Lipschitz bound 
    \(K = \left\|\begin{pmatrix} A & B \end{pmatrix}\right\|_2 \left\|W\right\|_2\)
    is off by at most a factor of \(\sqrt 2\,\kappa(W)\):
    \begin{align*}
      \frac{1}{\sqrt{2}\,\kappa(W)} K
      \leq
      \left\|f\right\|_{\mathrm{Lip}, 2}
      \leq
      K,
    \end{align*}
    where \(\kappa(W) = \sqrt{\lambda_\mathrm{max}(W W^\intercal) / \lambda_\mathrm{min}(W W^\intercal)}\) is the condition number.
\end{theorem}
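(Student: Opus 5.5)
The plan is to lower-bound the Lipschitz constant by the supremum of the Jacobian's spectral norm, and then lower-bound that supremum by an average over phases, which can be computed in closed form. The Jacobian is
\begin{align*}
  J(x) = \bigl(-A \Sin(Wx) + B \Cos(Wx)\bigr) W
  = \begin{pmatrix} A & B \end{pmatrix}
    \begin{pmatrix} -\Sin(Wx) \\ \Cos(Wx) \end{pmatrix} W .
\end{align*}
Since \(f\) is smooth, \(\|f\|_{\mathrm{Lip},2}^2 \ge \sup_x \|J(x)\|_2^2 = \sup_x \lambda_{\max}\bigl(J(x)J(x)^\intercal\bigr)\).

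\emph{Reducing to the torus.} Because \(W\) has full row rank, the map \(x \mapsto Wx\) is onto \(\mathbb R^n\). So the phase vector \(\theta = Wx\) takes every value, and in particular every point of the torus \([0,2\pi)^n\). Define
\begin{align*}
  G(\theta) = (-A\Sin\theta + B\Cos\theta)\, W W^\intercal\, (-A\Sin\theta + B\Cos\theta)^\intercal,
\end{align*}
which is exactly \(J(x)J(x)^\intercal\) at \(\theta = Wx\). Then \(\sup_x \|J(x)\|_2^2 = \sup_\theta \lambda_{\max}(G(\theta))\).

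\emph{Averaging step.} Let \(\theta\) be uniform on the torus. The eigenvalue \(\lambda_{\max}\) is convex on symmetric matrices, so Jensen's inequality gives
\begin{align*}
  \sup_\theta \lambda_{\max}(G(\theta)) \ge \mathbb E\,\lambda_{\max}(G(\theta)) \ge \lambda_{\max}\bigl(\mathbb E\, G(\theta)\bigr).
\end{align*}
The coordinates \(\theta_i\) are independent and uniform, so
\begin{align*}
  \mathbb E[\sin\theta_i \sin\theta_j] = \mathbb E[\cos\theta_i\cos\theta_j] = \tfrac12\delta_{ij},
  \qquad
  \mathbb E[\sin\theta_i\cos\theta_j] = 0 .
\end{align*}
Expanding \(G\) entrywise, the cross terms between the \(A\) and \(B\) parts vanish. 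Only the diagonal of \(WW^\intercal\) survives, and
\begin{align*}
  \mathbb E\, G = \tfrac12\bigl(A\operatorname{Diag}(WW^\intercal)A^\intercal + B\operatorname{Diag}(WW^\intercal)B^\intercal\bigr).
\end{align*}
This gives the first claim. The step needing the most care is this expectation computation; the subtle point is that it uses independence across distinct coordinates, which holds only because surjectivity lets all phase vectors \(\theta\) be realized.

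\emph{Comparison with \(K\).} For the upper bound \(\|f\|_{\mathrm{Lip},2}\le K\), note that the middle factor \(\begin{pmatrix} -\Sin\theta \\ \Cos\theta\end{pmatrix}\) has spectral norm \(1\). Indeed, its Gram matrix is \(\Sin^2\theta+\Cos^2\theta = I\) by the Pythagorean identity. Submultiplicativity then gives \(\|J(x)\|_2 \le K\) for every \(x\).

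For the lower bound, each diagonal entry of \(WW^\intercal\) is a Rayleigh quotient \(e_i^\intercal WW^\intercal e_i \ge \lambda_{\min}(WW^\intercal)\). Hence \(\operatorname{Diag}(WW^\intercal)\succeq \lambda_{\min}(WW^\intercal) I\), and
\begin{align*}
  \|f\|_{\mathrm{Lip},2}^2 \ge \tfrac12\lambda_{\min}(WW^\intercal)\,\lambda_{\max}(AA^\intercal+BB^\intercal) = \tfrac12\lambda_{\min}(WW^\intercal)\left\|\begin{pmatrix}A & B\end{pmatrix}\right\|_2^2 .
\end{align*}
Since \(K^2 = \bigl\|\begin{pmatrix}A & B\end{pmatrix}\bigr\|_2^2\,\lambda_{\max}(WW^\intercal)\), dividing gives \(\|f\|_{\mathrm{Lip},2}^2 \ge K^2/(2\kappa(W)^2)\). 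Taking square roots yields \(\frac{1}{\sqrt2\,\kappa(W)}K \le \|f\|_{\mathrm{Lip},2}\).
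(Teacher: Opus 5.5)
Your proposal is correct and follows essentially the same route as the paper: you write the Jacobian as \(\begin{pmatrix} A & B \end{pmatrix}\) times a stacked sine/cosine isometry times \(W\), use surjectivity of \(x \mapsto Wx\) to average \(JJ^\intercal\) over uniform phases on the torus, apply convexity of \(\lambda_{\max}\), and finish with \(\operatorname{Diag}(WW^\intercal) \succeq \lambda_{\min}(WW^\intercal) I\). The only cosmetic difference is that the paper realizes the uniform phases through an explicit input distribution \(X = W^\intercal (WW^\intercal)^{-1} Y\), whereas you average directly over \(\theta = Wx\).
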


The proof appeals to the probabilistic method to find an input with a typical Jacobian norm.
A key cancellation step uses the fact that the slopes of \(\cos\) and \(\sin\) range symmetrically from \(-1\) to \(1\); a similar method would not work for ReLU or tanh, whose slopes do not have this symmetry.

\begin{remark}
  This surprising result undermines the principle that monotonicity makes a good activation function.
  From this point of view, a monotonic activation function such as ReLU has Lipschitz constant 1, but ``wastes'' half of its capacity on its saturation region.
  Then semidefinite certificates such as LipSDP verification \citep{fazlyab_efficient_2019} and AOL parameterization \citep{araujo_unified_2023} can be viewed as elegant approaches to quantify and track this waste (Fact~\ref{fact:lipsdp-narrow-sectors}).
  But the trigonometric layer we have constructed here eliminates the waste altogether.
\end{remark}

\section{Numerical examples}
\label{sec:lower-bounds}
We offer numerical examples to support our claims regarding when and how the trivial bound is tight.

Since the scope of this paper is the technical question of the relationship between a neural network Lipschitz constant and its parameters, our primary target metric is \emph{tightness}, the ratio between the Lipschitz upper bound and the ground truth Lipschitz constant.
Because it is intractable to compute the true Lipschitz constant, we instead compute lower bounds and use the inequality
\begin{align*}
1 \leq \text{tightness}
&=
  \frac{\text{Lipschitz upper bound}}{\text{Lipschitz ground truth}}
  \leq 
  \frac{\text{Lipschitz upper bound}}{\text{Lipschitz lower bound}},
\end{align*}
where the lower bound is estimated empirically using local optimization.

\paragraph{Learning MNIST with a dead neuron}
We reproduce a 3-layer, width-100 ReLU network, trained on MNIST \citep{xu_learning_2023}.
The caveat is that the network is initialized with an adversarial ``dead neuron'' assigning a high slope to the ReLU of a preactivation that is always negative.
We compare two training regimes: standard SGD, and SGD with a penalty on the trivial Lipschitz bound.
The takeaway is that, as a polynomial-time verifier, ECLipsE is unable to excise the dead neuron, and is still loose by a factor of 345x.
However, even a small penalty on the trivial bound attenuates the dead pathway: the trivial bound falls from \(30060.89\) to \(218.58\).
The experimental procedure is described in \cref{subsec:dead-neuron-mnist-methodology}.
Results are summarized in \cref{tab:dead-neuron-mnist}.

\paragraph{ReLU vs.~new architecture}
In this section, we compare (a) a baseline ReLU network for the same problem with (b) a ReLU network with trivial bound regularization and (c) a cos/sin neural network with trivial bound regularization.
We use the multiclass hinge loss to capture the SVM-like objective of minimizing the trivial bound subject to correctly classifying the data with a given margin.
The full methodology and results are reported in \cref{subsec:mnist-three-way-methodology,tab:mnist-three-way-performance,tab:mnist-three-way-lipschitz}.
For the ReLU network, adding the trivial-bound penalty tightens the trivial upper bound from 12.60 to 1.27 times the lower bound.
Replacing ReLU with our novel trigonometric hidden layer parameterization further tightens the trivial bound to 1.21 times the lower bound.
These results support our claim that because the cos/sin architecture encodes phase information in a linear-algebraic form, it is more amenable to weight-based Lipschitz analysis.

\paragraph{Iris: theoretical lower bound}
\label{subsec:iris-poly}
We train and evaluate a small MLP (detailed in \cref{subsec:iris-methodology}) using the novel cos/sin architecture on the Iris dataset \citep[3-Clause BSD license]{pedregosa_scikit-learn_2011}
to validate
\cref{thm:lasl-unit-lipschitz}.
We optimize the penalized objective \(\mathrm{NLL} + \lambda K^2\) for \(\lambda \in \{0,10^{-2}\}\).
To assess tightness, we compare \(K\) to (i) the guaranteed lower bound
\[
\frac{
  K
}{
  \sqrt{2}\,\kappa(W)
}
\le
\left\|f\right\|_{\mathrm{Lip},2}
\le
K,
\]
and (ii)
the empirical lower bound \(\widehat{L}\).
While \(\lambda=0\) results in an exploding \(K\), adding the Lipschitz penalty reduces \(K\) by more than an order of magnitude (\cref{tab:iris-poly-lipschitz}), making the trivial bound essentially tight in practice.
%
%
Visually, on two-dimensional slices of the domain, the regularized network (\cref{fig:iris-high}) exhibits smoother boundaries than the unregularized network (\cref{fig:iris-low}).

\section{Conclusion}
Even though verifying the Lipschitz constant of an unconstrained neural network is NP-hard, there is an abundance of creative polynomial-time relaxations to bound it tightly for neural networks arising in practice.
Perhaps the truly hard instances are limited to far-fetched complexity-theoretic reductions, and most problems have some latent structure which can be easily exploited.
We show theoretically that this is not the case: very similar neural networks can have arbitrarily different Lipschitz constants, and distinguishing between them is NP-hard.
This negative result applies to all efficient verifiers.

After establishing that a leading Lipschitz verification algorithm (based on controlling secants using the lossy S-procedure) is liable to infinite conservatism on certain hidden layers, we conclude that differential semidefinite constraints, and efficient algorithms more broadly, are not sufficient to verify Lipschitz constants of arbitrary networks.
We furthermore claim that they may not be necessary:
through spherical cow analyses and tabletop experiments, we suggest that a majority of the effectiveness of SDP-based approaches can be captured in a fraction of the time by continuous optimization.

This insight permits us to end on a positive note.
Because continuous optimization is already how neural networks are supervised, we may simply augment the optimization objective, tightening an efficient Lipschitz certificate as a by-product of training.
This intervention may promote verifiability by promoting efficient representations and suppressing ``dead neurons.''
We further posit an idea for designing universally approximating multilayer perceptrons with guaranteed ``liveness,'' i.e.~theoretical Lipschitz lower bounds.

By closing the gap between efficient upper and lower bounds, our numerical examples set a high bar for future work on Lipschitz verification.
It may be that the bar is unattainable.
Then Lipschitz-verifiable architectures such as our trigonometric proposal may be the only viable option for verifying unconstrained neural networks.

\clearpage


\section*{Reproducibility Statement}
The results can be reproduced by running the Jupyter notebooks included in the \texttt{repro} directory of \texttt{lipschitz.zip}.

\bibliographystyle{plainnat}
\bibliography{lipschitz}

\newpage
\appendix
\tableofcontents
\listoffigures
\listoftables

\section{Objections to the trivial bound considered}
\label{sec:objections-to-trivial-bound}
Aiming at the broader objective of Lipschitz and robust training and verification, and having argued for the uncommon\footnote{The closest idea is \citet{dubach_multiplicative_2025}, which penalizes the product of Frobenius norms.} path of least resistance---penalizing the product of induced matrix norms---over more sophisticated methods,
we address several natural objections.

\paragraph{Trivial bound too loose}
Some argue that the 
trivial Lipschitz bound should not be penalized directly,
because the trivial Lipschitz bound is known to be loose in deeper networks.
Penalizing it would amount to an overly restrictive regularization.
For this reason, it is better to enforce a semidefinite constraint (a tighter Lipschitz bound) on the matrix weights
\citep{raghunathan_certified_2018,wang_direct_2023,xu_learning_2023,revay_recurrent_2024,junnarkar_synthesizing_2024}.

We reply that there are different sources of looseness, and they call for different responses.
For the gauge looseness caused by ill-conditioned factorizations, \cref{subsec:ill-conditioned-weight-matrices} shows that the same mathematical function can have arbitrarily bad trivial bounds under ReLU rescaling, while \cref{thm:condition-number-tightness-product-identity} shows in a linear model that minimizing the trivial bound over the loss level set recovers the exact Lipschitz constant.
For unreachable-slope looseness, the dead-neuron MNIST experiment in \cref{tab:dead-neuron-mnist} shows both sides of the story: ECLipsE gives a much tighter upper bound than the raw trivial bound on an adversarially initialized ReLU network, but a direct penalty on the trivial bound attenuates the dead pathway and brings both upper bounds much closer to the empirical lower bound.
Finally, in the three-way MNIST comparison, the regularized ReLU network improves \(K/\widehat L\) from \(12.60\) to \(1.27\), and the cos/sin parameterization improves it further to \(1.21\) (\cref{tab:mnist-three-way-lipschitz}).
Thus the objection is correct as a warning about arbitrary parameterizations, but not as an argument against optimizing the trivial bound.

Qualitatively, the induced norm is a slacker penalty than Tikhonov regularization such as weight decay, which penalizes the Frobenius norm.
The spectral norm penalty is only active on the largest singular value of each weight matrix and has a rank-1 gradient, while the Frobenius norm penalty is active on all singular values (leading to a full-rank gradient).
This pressure tends to lead to well-conditioned weight matrices because optimizing the lower singular spaces is ``free.''

\paragraph{Lipschitz-constrained parameterization}
The trivial Lipschitz bound is less conservative when weight matrices are well-conditioned.
Therefore, it is better to constrain the weight matrices to be well-conditioned and Lipschitz by parameterization, either by projected gradient descent onto layerwise Lipschitz balls \citep{gouk_regularisation_2021} or by direct parameterization of the Lipschitz cylinder \citep{
  cisse_parseval_2017,
  singla_improved_2022,
  wang_direct_2023,
  lai_enhancing_2025,
  bethune_deep_2024}.

We reply that this is an orthogonal design choice.
Hard Lipschitz constraints are appropriate when a prescribed global constant is required, but they do not remove the reachability and dead-neuron phenomena identified in \cref{subsec:dead-neurons}.
Moreover, parameterizations such as exponential maps onto the Stiefel manifold can be computationally expensive and highly curved, and SDP-based parameterizations inherit the complexity of the semidefinite constraint.
Our proposal is deliberately softer: use saturated polyactivations, such as the cos/sin layer, so that the activation does not waste Lipschitz capacity, and train with a Lagrangian penalty on the product of induced matrix norms.
The MNIST comparison in \cref{tab:mnist-three-way-performance,tab:mnist-three-way-lipschitz} shows that this soft approach can improve the trivial-bound tightness without sacrificing predictive performance.

Moreover, despite the connection via Lagrange multipliers, a Lipschitz constraint may prove more statistically fragile than Lipschitz regularization.
This conjecture is supported by an analogy to the Lasso, which is more robust to its hyperparameter in penalty form than in its constrained or basis pursuit forms \citep[Chapter 7]{wainwright_high-dimensional_2019}.


Alternatively, well-conditioned weight matrices may be achieved by penalizing the sum of the quantities \(\frac{1}{2} \left\|W\right\|^2_2 - \frac{1}{2n} \left\|W\right\|_{F}^2\) for each weight matrix \(W\) \citep{nenov_almost_2024}.
We reply that well-conditioning is necessary but not sufficient for a tight trivial Lipschitz bound.
The Frobenius correction in the ill-conditioning loss is more complicated than the product penalty and is not directly related to the ultimate goal of minimizing the Lipschitz constant.

\paragraph{Numerical considerations}
Some have reported that it is hard to train networks with a Lipschitz penalty, because ``the product of norms can be very large'' \citep{gouk_regularisation_2021}.
It is therefore better to penalize the average of the induced matrix norms, which is additionally convex \citep{yoshida_spectral_2017}.

We answer, first, that floating-point precision has more than enough headroom to handle the product of induced norms.
Modern optimizers such as the ``Ada-'' family and Cocob \citep{orabona_training_2017} with gradient-norm adaptation can handle the initially large gradient of the product of induced norms and its decay by several orders of magnitude over later optimization steps.
This is how the MNIST three-way experiment in \cref{subsec:mnist-three-way-methodology} is trained.

We answer, second, that penalizing the average of induced norms is not directly related to the ultimate goal of minimizing the Lipschitz constant.
Concavity is the point:
to minimize a geometric mean of nonnegative numbers, one focuses on the smallest term.
By the AM-GM inequality, the product of induced norms is a \emph{less} onerous penalty than the sum of the induced norms.

\paragraph{Computational complexity}
Minimizing the product of spectral norms is computationally expensive, 
because spectral norms and their gradients are more expensive than ordinary matrix-vector operations and are often approximated by power iteration \citep{yoshida_spectral_2017,miyato_spectral_2018}.
But in our experiments, spectral-norm computation makes negligible difference to training time.


\section{ECLipsE as a Schur-complement decomposition of LipSDP}
\label{sec:eclipse-lipsdp-derivation}
ECLipsE \citep{xu_eclipse_2024} begins from the same monotonicity certificate as LipSDP \citep{fazlyab_efficient_2019}, but uses the block structure of that certificate to construct a sequence of quadratic metrics.
Let \(x^\ell\) and \(\tilde x^\ell\) be the activations generated by two inputs, and set \(\Delta x^\ell=x^\ell-\tilde x^\ell\).
For a positive semidefinite matrix \(M\), write \(\|u\|_M^2=u^\top M u\).
Suppose the elementwise nonlinearity is slope-restricted in \([\alpha,\beta]\), and define
\begin{align}
  p=\alpha\beta,
  \qquad
  m=\frac{\alpha+\beta}{2}.
\end{align}
Then for each hidden layer
\begin{align}
  \Delta v^\ell = W^\ell \Delta x^{\ell-1},
  \qquad
  \Delta x^\ell
  =
  \sigma^\ell(v^\ell)-\sigma^\ell(\tilde v^\ell)
\end{align}
satisfies the incremental sector inequality
\begin{align}
  \begin{bmatrix}
    \Delta v^\ell \\
    \Delta x^\ell
  \end{bmatrix}^{\!\top}
  \begin{bmatrix}
    p\Lambda_\ell & -m\Lambda_\ell \\
    -m\Lambda_\ell & \Lambda_\ell
  \end{bmatrix}
  \begin{bmatrix}
    \Delta v^\ell \\
    \Delta x^\ell
  \end{bmatrix} 
  \leq 0,
  \qquad
  \Lambda_\ell \in \mathbb D_+,
  \label{eq:sector-iqc}
\end{align}
where \(\mathbb D_+\) denotes the cone of nonnegative diagonal matrices.
Equivalently,
\begin{align}
  \|\Delta x^\ell\|_{\Lambda_\ell}^2
  +
  p\|W^\ell\Delta x^{\ell-1}\|_{\Lambda_\ell}^2
  \leq
  2m(\Delta x^\ell)^\top \Lambda_\ell W^\ell \Delta x^{\ell-1}.
  \label{eq:sector-expanded}
\end{align}
The right-hand side can be bounded by completing the square.
Given \(M^{\ell-1}\succ0\), choose \(\Lambda_\ell\) so that
\begin{align}
  A^\ell
  =
  M^{\ell-1}
  +
  p(W^\ell)^\top\Lambda_\ell W^\ell
  \succ0.
  \label{eq:eclipse-a-ell}
\end{align}
Then
\begin{align}
  2m(\Delta x^\ell)^\top \Lambda_\ell W^\ell \Delta x^{\ell-1}
  \leq
  \|\Delta x^{\ell-1}\|_{A^\ell}^2
  +
  m^2
  \left\|\Delta x^\ell\right\|_{\Lambda_\ell W^\ell (A^\ell)^{-1}(W^\ell)^\top \Lambda_\ell}^2.
\end{align}
Combining this with \eqref{eq:sector-expanded}, and using
\(\|\Delta x^{\ell-1}\|_{A^\ell}^2-p\|W^\ell\Delta x^{\ell-1}\|_{\Lambda_\ell}^2
=\|\Delta x^{\ell-1}\|_{M^{\ell-1}}^2\), gives
\begin{align}
  \|\Delta x^\ell\|_{M^\ell}^2
  \leq
  \|\Delta x^{\ell-1}\|_{M^{\ell-1}}^2,
  \qquad
  M^\ell
  =
  \Lambda_\ell
  -
  m^2\Lambda_\ell W^\ell (A^\ell)^{-1}(W^\ell)^\top\Lambda_\ell.
\end{align}
This is the layerwise meaning of the ECLipsE decomposition.
LipSDP asks for all multipliers \(\Lambda_\ell\) at once through one large block-tridiagonal semidefinite constraint.
ECLipsE applies the corresponding block Cholesky or Schur-complement recursion: starting with \(M^0=I\), choose \(\Lambda_\ell\) so that \(A^\ell\succ0\) and \(M^\ell\succ0\), and pass \(M^\ell\) forward as the metric for the next layer.
The matrices \(A^\ell\) are the successive Schur-complement pivots of the LipSDP matrix, while \(M^\ell\) is the part of the next pivot that remains after removing the next sector term.
When \((\alpha,\beta)=(0,1)\), we have \(p=0\) and \(m=1/2\), so \(A^\ell=M^{\ell-1}\) and the recursion reduces to the familiar ECLipsE formula
\begin{align}
  M^\ell
  =
  \Lambda_\ell
  -
  \frac{1}{4}\Lambda_\ell W^\ell (M^{\ell-1})^{-1}(W^\ell)^\top\Lambda_\ell.
\end{align}

The point of constructing the \(M^\ell\)'s is that \eqref{eq:eclipse-layerwise-iqc} telescopes.
For the hidden layers, this gives \eqref{eq:eclipse-telescope-hidden}.
If the final readout is linear, then \eqref{eq:eclipse-final-schur} implies
\begin{align}
  F\|\Delta x^L\|_2^2
  =
  F\|W^L\Delta x^{L-1}\|_2^2
  \leq
  \|\Delta x^{L-1}\|_{M^{L-1}}^2
  \leq
  \|\Delta x^0\|_2^2.
\end{align}
Consequently \(\|f(x)-f(\tilde x)\|_2\leq F^{-1/2}\|x-\tilde x\|_2\).
In the notation of \eqref{eq:eclipse-layerwise-iqc}, this is the same as taking the output metric to be \(M^L=FI\): the learned intermediate metrics certify a nonincreasing chain of quadratic distances from input to output, and the hidden-layer metrics cancel rather than being multiplied as scalar layer norms.

\section{Proofs}
\label{sec:proofs}
\begin{proof}[Proof of \Cref{thm:per-instance-indistinguishability}]
Fix a ReLU network \(f\) and choose a hidden layer of width \(d\).
Let \(h(x)\) denote the vector of activations at that layer.
It suffices to add one extra neuron to each of two consecutive hidden layers, and to route the second extra neuron's output to a final output coordinate with a coefficient \(\alpha>0\) that can be chosen arbitrarily large.

Choose a scalar affine functional \(r(x)=a^\top h(x)+c\) of the existing hidden activations \(h\).
For parameters \(\underline z,\overline z\), define the added neuron in the first new layer by
\[
  v(x)
  =
  \operatorname{ReLU}\!\left(\underline z+r(x)\right),
\]
and define the added neuron in the next hidden layer by
\[
  u(x)
  =
  \operatorname{ReLU}\!\left(
    \overline z
    -
    v(x)
  \right).
\]
Let \(\tilde f\) agree with \(f\) on all original coordinates and add the term \(\alpha u(x)\) to one output coordinate.
Since the two added neurons lie in consecutive hidden layers, this construction increases the hidden width by one and otherwise changes only the parameters incident to the two new neurons and the final output edge.

If \(r(x)\geq \overline z-\underline z\) for every input \(x\), then \(v(x)\geq \overline z\), and therefore \(u(x)=0\) for every \(x\).
In this case \(\tilde f=f\).
On the other hand, if there is an input \(x_0\) for which
\[
  r(x_0)\in(-\underline z,\overline z-\underline z),
\]
then \(u\) is locally affine with nonzero derivative at every point in a neighborhood on which the same ReLU activation pattern persists.
Consequently the local Jacobian of the added output coordinate contains the term
\[
  \alpha\, a^\top J_h(x),
\]
up to sign on that neighborhood.
Whenever this derivative is nonzero, the Lipschitz constant of \(\tilde f\) is at least a positive constant times \(\alpha\).
Since \(\alpha\) is arbitrary, this Lipschitz constant can be made arbitrarily large.

It remains to justify the indistinguishability claim.
The predicate
\[
  \exists x:\quad r(x)<\overline z-\underline z
\]
is a layer-reachability question for a ReLU network.
Layer reachability is NP-hard, because ReLU networks can encode Boolean circuits and the existence of an input reaching a prescribed interval encodes satisfiability.
If a polynomial-time verifier could distinguish the case \(\tilde f=f\) from the case in which \(\operatorname{Lip}(\tilde f)\) is arbitrarily large, then applying that verifier to the construction above would decide the corresponding reachability instance in polynomial time.
This would imply P=NP.
Thus, assuming P!=NP, no polynomial-time verifier can tell the two cases apart.

Finally, the two cases can be toggled by perturbing \(\underline z\) and \(\overline z\) by an arbitrarily small amount when the reachable set of \(r(x)\) lies arbitrarily close to the interval boundary.
Hence the two alternatives may differ only by an infinitesimal perturbation of these biases.
\end{proof}
\begin{proof}[Proof of \Cref{thm:trivial-bound-tight-asl-sla-las} (ASL)]
The Jacobian of the ASL unit \(x \mapsto \sigma(Wx +b)\) is
\begin{gather*}
  x \mapsto \operatorname{diag}\left(\sigma'(Wx +b)\right) W.
\end{gather*}
Suppose that \(\sigma\) achieves its steepest slope at \(z_0\).
For generic \(W\) and \(b\),
there exists a preimage \(x_0\) such that \(W x_0 + b = z_0 \bm{1}\).
At \(x = x_0\), \(\operatorname{diag}\left(\sigma'(Wx +b)\right)\) becomes the scalar matrix \(\sigma'(z_0) I\), and the Jacobian equals \(\sigma'(z_0) W\).
The trivial Lipschitz bound is tight by the homogeneity of induced matrix norm.
\end{proof}
\begin{proof}[Proof of \Cref{thm:trivial-bound-tight-asl-sla-las} (SLA)]
The Jacobian of the SLA unit \(x \mapsto W\sigma(x) + b\)
is
\begin{gather*}
  x \mapsto W \operatorname{diag}(\sigma'(x)),
\end{gather*}
which achieves the trivial Lipschitz bound in the \(p\)-norm at \(x = z_0 \bm{1}\).
\end{proof}
\begin{proof}[Proof of \Cref{thm:trivial-bound-tight-asl-sla-las} (LAS)]
  The Jacobian of the LAS unit \(x \mapsto W \sigma(x + b)\) is
  \begin{align*}
    x \mapsto W \operatorname{diag}\left(\sigma'(x + b)\right),
  \end{align*}
  which achieves the trivial Lipschitz bound in the \(p\)-norm at \(x = z_0 \bm{1} - b\).
\end{proof}

\begin{proof}[Proof of \Cref{thm:lasl-biases-relu-tanh} (ReLU)]
  Consider the neural network \(\mathbb R \to \mathbb R\) defined by
  \begin{align*}
    x \mapsto \sum_{i = 1}^d (-1)^i \operatorname{ReLU}(x + b_i) = W^1 \operatorname{ReLU}(W^0 x + b)
  \end{align*}
  where 
  \begin{align*}
    W^1 &= (-1, 1, -1, \ldots) \\
    W^0 &= (1, 1, 1, \ldots)^\intercal \\
    b &= (1, 2, 3, \ldots)
  \end{align*}
  The true Lipschitz constant is 1, but the trivial 2-Lipschitz bound is
  \begin{align*}
    \underbrace{\left\|W^1\right\|_2}_{= \sqrt{d}} \underbrace{\left\|W^0 \right\|_2}_{= \sqrt d} = d.
  \end{align*}
  On the other hand, if the bias is changed to 
  \begin{align*}
    b' = (1, -1, 1, -1, \ldots),
  \end{align*}
  then the true Lipschitz constant is \(\Omega(d)\).
\end{proof}
\begin{proof}[Proof of \Cref{thm:lasl-biases-relu-tanh} (tanh)]
  The idea is to toggle between overlapping and non-overlapping regions of the hyperbolic tangent function.
  Consider the neural network \(\mathbb R \to \mathbb R\) defined by  
  \begin{align*}
    x \mapsto \sum_{i = 1}^d \operatorname{tanh}(x + b_i) = \bm 1^\intercal \operatorname{tanh}(\bm 1 x + b).
  \end{align*}
  The trivial 2-Lipschitz bound is \(\Omega(d)\) and is exact when \(b = 0\).
  But if the biases are changed to 
  \begin{align*}
    b' = (1, 2, 3, \ldots),
  \end{align*}
  then the true Lipschitz constant is 
  \begin{align*}
    \sup_{x} \sum_{i = 1}^{d} \operatorname{sech}^2(x + i) = O(1).
  \end{align*}
\end{proof}
\begin{proof}[Proof of Theorem~\ref{thm:lasl-biases-sin}]
  Consider the LASL unit \(f_b : \mathbb R \to \mathbb R\) defined by
  \begin{align*}
    f_b(x) &= \bm{1}^\intercal \sin(\bm{1} x + b) = \sum_{i = 1}^{2d} \sin(x + b_i),
  \end{align*}
  where \(\bm{1} \in \mathbb R^{2d}\) is the vector of ones and \(b \in \mathbb R^{2d}\) is a bias vector.
  The trivial 2-Lipschitz bound equals
  \begin{align*}
    \left\|\bm{1}^\intercal\right\|_2\, \left\|\bm{1}\right\|_2 = \left\|\bm{1}\right\|_2^2 = 2d = \Omega(d),
  \end{align*}
  since \(|\sin'(\cdot)| \le 1\).
  
  On one hand, if we take \(b = 0\), the true Lipschitz constant is \(2d\), and the trivial bound is tight.
  
  On the other hand, if we take \(b_i = i\pi\) for all \(i\), then each sine term cancels out with its neighbor, resulting in a constant function with Lipschitz constant 0.
\end{proof}

\begin{proof}[Proof of \cref{thm:condition-number-tightness-product-identity}]
  \ref{item:K-equals-1} \(\iff\) \ref{item:minimize-K} follows from the fact that \(K=1\) is achievable, and \(K\) cannot be less than 1:
  \begin{align*}
    1 &= \left\|I\right\| = \left\|W_L W_{L-1} \cdots W_1\right\| \\
    &\le \left\|W_L\right\| \left\|W_{L-1}\right\| \cdots \left\|W_1\right\| = K.
  \end{align*}

  \ref{item:minimize-K} \(\implies\) \ref{item:condition-number-1} by the calculation
  \begin{align*}
    1 = K &\geq \left\|W_L\right\| \left\|W_{L-1} \cdots W_1\right\| 
    \\
    &= \left\|W_L\right\| \left\|W_L^{-1}\right\| 
    = \kappa(W_L)
  \end{align*}
  and induction on \(L\).
  
  \ref{item:condition-number-1} \(\implies\) \ref{item:K-equals-1} by the fact that each \(W_i\), \(i \in \{1, \ldots, L\}\), is a scalar multiple of an orthogonal matrix, and the product of orthogonal matrices is orthogonal.
\end{proof}

\begin{proof}[Proof of \cref{thm:lasl-unit-lipschitz}]
  Since \(W\) has full row rank, \(n \leq r\) and the map \(x \mapsto Wx\) is surjective onto \(\mathbb R^n\).
  The Jacobian of \(f\) is
    \begin{gather*}
      Df(x) =
      \begin{pmatrix}
        -A & B
      \end{pmatrix}
      \begin{pmatrix}
        \Sin(Wx) \\
        \Cos(Wx)
      \end{pmatrix}
      W.
    \end{gather*}
  We use the probabilistic method to argue for the existence of an \(x^\star \in \mathbb R^r\) such that \(\left\|Df(x^\star)\right\|_2\) is large.
  Assign a probability measure to \(X\) such that \(Y = WX\) has the uniform distribution on the torus \([-\pi, \pi]^n\); for instance, sample \(Y\) uniformly and set \(X = W^\intercal(WW^\intercal)^{-1}Y\).
  Define
  \begin{align*}
    \Sigma^* =
    \expect Df(X) Df(X)^\intercal
    = 
    \begin{pmatrix}
        -A & B
      \end{pmatrix}
      \expect Q
      \begin{pmatrix}
        -A^\intercal \\ B^\intercal
      \end{pmatrix},
  \end{align*}
  where
  \begin{align*}
    Q  &= \begin{bmatrix}
      \Sin(Y) W W^\intercal \Sin(Y) & \Sin(Y) W W^\intercal \Cos(Y)
      \\
      \Cos(Y) W W^\intercal \Sin(Y) & \Cos(Y) W W^\intercal \Cos(Y)
    \end{bmatrix}.
  \end{align*}
  Taking expectations, we get
  \begin{align}
    \expect Q &=
    \frac{1}{2}
    \begin{pmatrix}
      \operatorname{Diag}\del{W W^\intercal} & 0\\
      0 & \operatorname{Diag}\del{W W^\intercal}
    \end{pmatrix}
    \label{eq:lasl-unit-lipschitz-expected-Q}
  \end{align}
  which implies that 
  \begin{gather*}
    \expect Df(X) Df(X)^\intercal
     = 
     \frac{1}{2} 
     \del{A \operatorname{Diag}\del{W W^\intercal} A^\intercal + B \operatorname{Diag}\del{W W^\intercal} B^\intercal}.
  \end{gather*}
  Because \(\lambda_\mathrm{max}\) is convex,
  \begin{gather*}
    \expect \lambda_\mathrm{max} \del{Df(X) Df(X)^\intercal}
     \geq 
     \frac{1}{2} 
     \lambda_\mathrm{max}
     \del{A \operatorname{Diag}\del{W W^\intercal} A^\intercal + B \operatorname{Diag}\del{W W^\intercal} B^\intercal}.
  \end{gather*}
  Therefore there exists an \(x^\star\) such that
  \begin{gather*}
    \left\|Df(x^\star)\right\|_2^2
    \geq
    \frac{1}{2} 
    \lambda_\mathrm{max}
    \del{A \operatorname{Diag}\del{W W^\intercal} A^\intercal + B \operatorname{Diag}\del{W W^\intercal} B^\intercal}.
  \end{gather*}
  To get a closed form, use the fact that
  \begin{align*}
    \operatorname{Diag}\del{W W^\intercal} \geq \lambda_\mathrm{min}(W W^\intercal) I.
  \end{align*}
  Hence
  \begin{align*}
    \left\|f\right\|_{\mathrm{Lip}, 2}^2
    &\geq
    \frac{1}{2}\lambda_\mathrm{min}(W W^\intercal)
    \lambda_\mathrm{max}(A A^\intercal + B B^\intercal)
    \\
    &=
    \frac{K^2}{2\kappa(W)^2},
  \end{align*}
  while the upper bound \(\left\|f\right\|_{\mathrm{Lip},2} \leq K\) is the trivial Lipschitz bound.
\end{proof}

\clearpage
\section{Iris experiment details}

\subsection{Iris: methodology}
\label{subsec:iris-methodology}
We train a small classifier on the Iris dataset (4 features, 3 classes) with standardized inputs and a stratified 80/20 train/test split (120/30).
The model is a 2-layer MLP with one hidden layer of width 4 and a \((\cos,\sin)\)-activation layer.
Up to an additive bias (which does not affect Lipschitz constants), the network can be written as
\(
f(x) = A\cos(Wx) + B\sin(Wx),
\)
so that the standard product-of-layer 2-norm bound equals the trivial LASL bound in \cref{thm:lasl-unit-lipschitz},
\[
K
=
\left\|
  \begin{pmatrix}
    A & B
  \end{pmatrix}
\right\|_2
\left\|W\right\|_2.
\]
We implement the network using Equinox \citep{kidger_equinox_2021} and optimize using the Cocob optimizer \citep{orabona_training_2017} (provided in Optax \citep{deepmind_deepmind_2020}) for 20{,}000 full-batch steps.

Training takes less than 5 minutes per \(\lambda\) on an NVIDIA T1200 GPU.

\subsection{Iris: results}
\label{subsec:iris-results}
\begin{table}[h]
\centering
\caption{Iris experiment: final predictive performance after 20{,}000 steps.}
\label{tab:iris-poly-performance}
\begin{tabular}{@{}lrrrr@{}}
\toprule
\(\lambda\) & Train Loss & Test Loss & Train Acc. & Test Acc. \\
\midrule
\(0\)       & \(1.0\times 10^{-4}\) & \(0.4661\) & \(1.000\) & \(0.933\) \\
\(10^{-2}\) & \(0.0542\)            & \(0.0932\) & \(0.983\) & \(0.967\) \\
\bottomrule
\end{tabular}
\end{table}

\begin{table}[h]
\centering
\caption{Iris experiment: 2-norm Lipschitz upper bound \(K\) versus lower bounds. Here \(K/(\sqrt{2}\kappa(W))\) is the guaranteed lower bound from \cref{thm:lasl-unit-lipschitz}, while \(\widehat{L}\) is an empirical lower bound from local optimization of the Lipschitz ratio over input pairs.}
\label{tab:iris-poly-lipschitz}
\begin{tabular}{@{}lrrrr@{}}
\toprule
\(\lambda\) & \(K\) (upper) & \(K/(\sqrt{2}\kappa(W))\) & \(\widehat{L}\) & \(K/\widehat{L}\) \\
\midrule
\(0\)       & \(179.21\) & \(25.04\) & \(105.74\) & \(1.69\) \\
\(10^{-2}\) & \(5.84\)   & \(0.82\)  & \(5.84\)   & \(1.00\) \\
\bottomrule
\end{tabular}
\end{table}

\begin{figure*}[h]
  \centering
  \includegraphics[width=\linewidth]{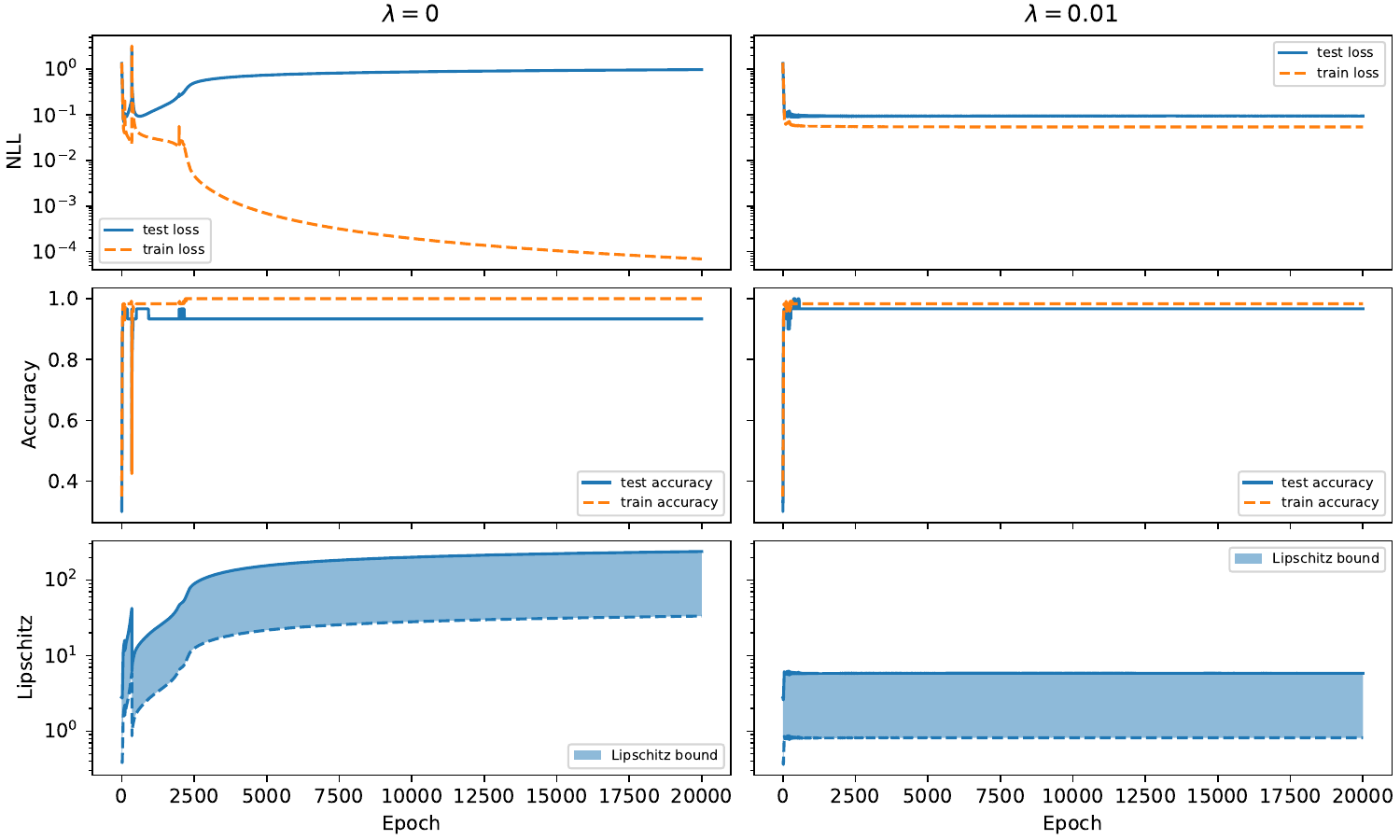}
  \caption{Iris experiment: training curves for \(\lambda = 0\) (left) and \(\lambda = 10^{-2}\) (right). Shaded area indicates the gap between theoretical upper and lower bounds on the Lipschitz constant.}
  \label{fig:iris-training}
\end{figure*}

\begin{figure*}[h]
  \centering
  \includegraphics[width=\linewidth]{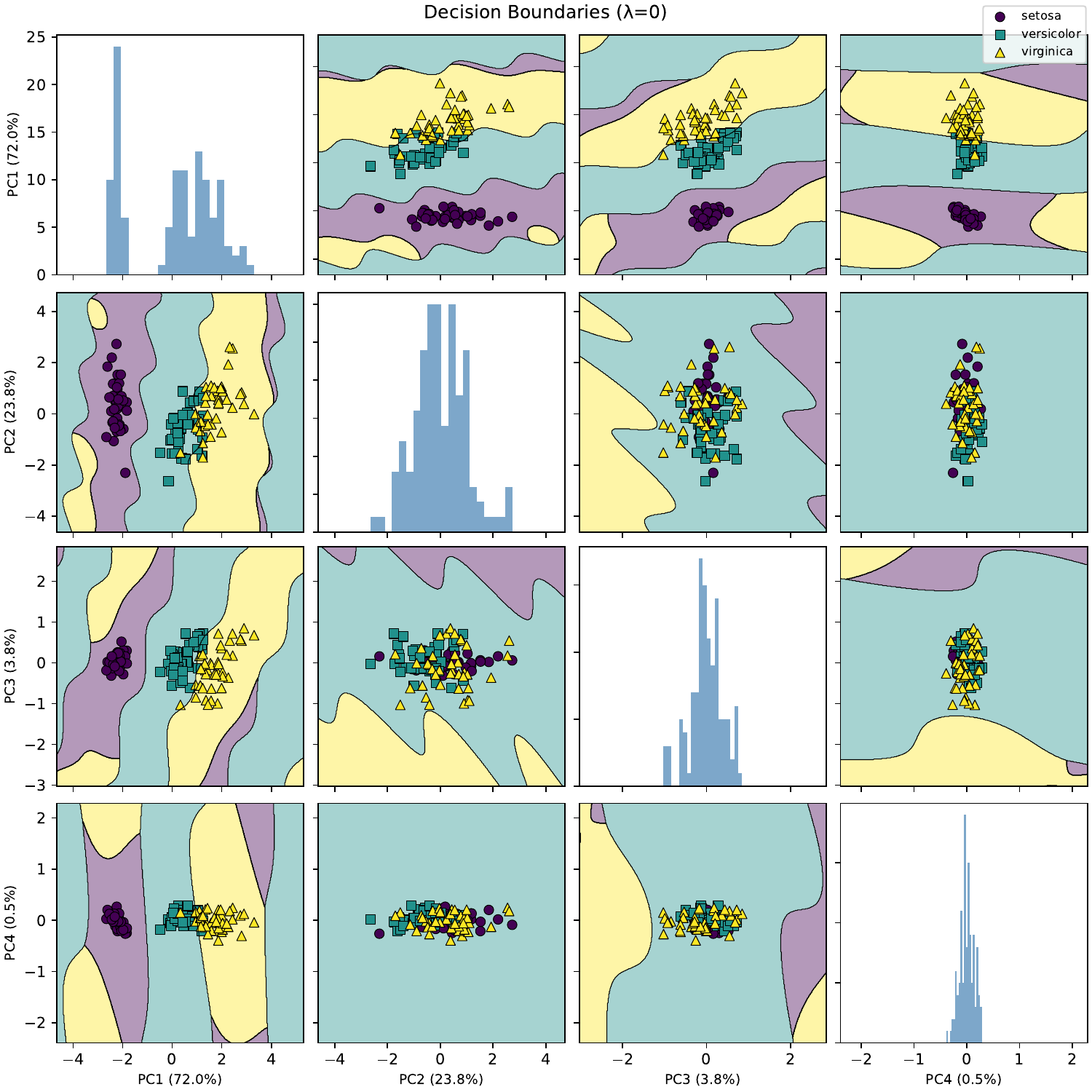}
  \caption{Iris experiment: decision boundaries in 2D principal component planes for \(\lambda = 0\). (Training data)}
  \label{fig:iris-low}
\end{figure*}

\begin{figure*}[h]
  \centering
  \includegraphics[width=\linewidth]{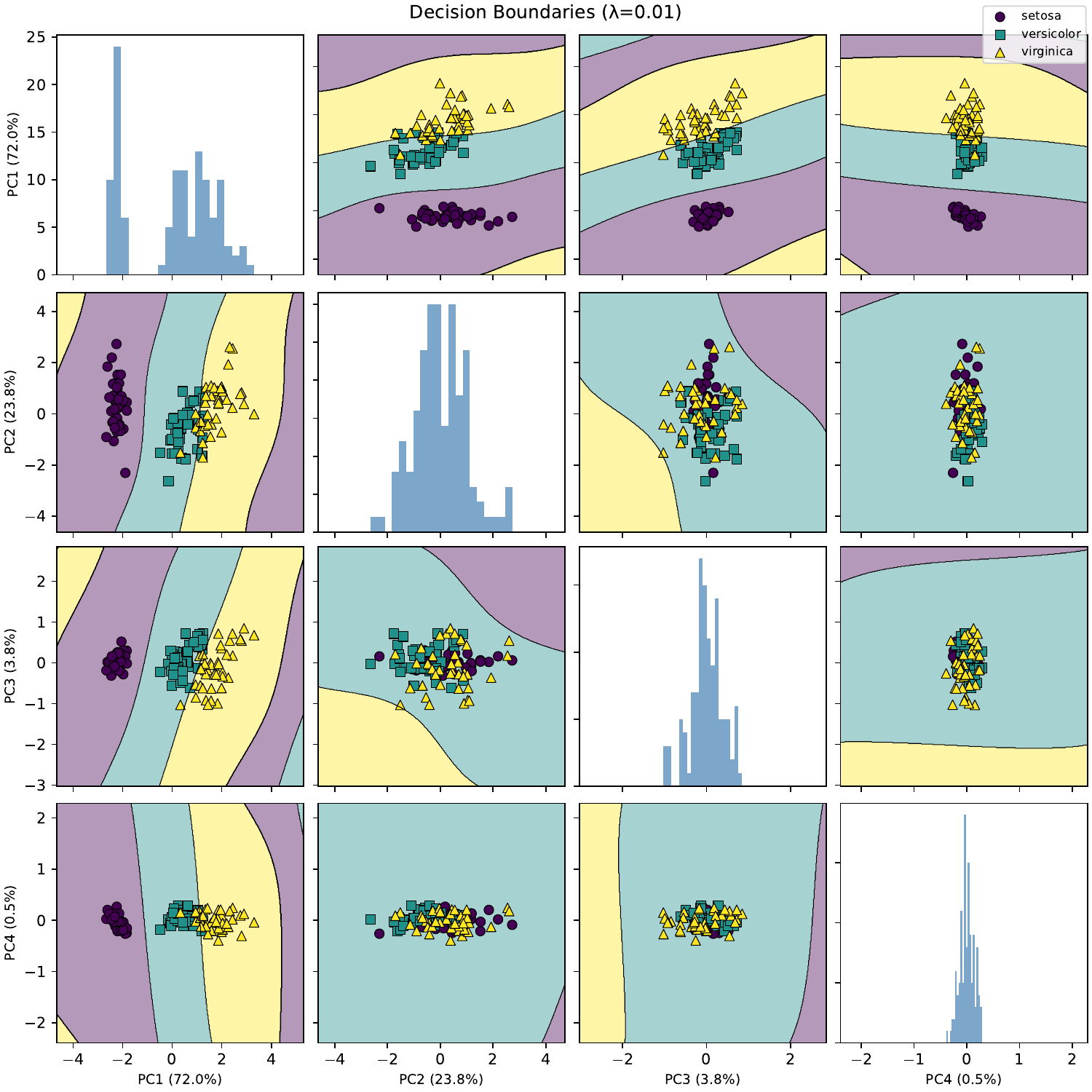}
  \caption{Iris experiment: decision boundaries in 2D principal component planes for \(\lambda = 10^{-2}\). (Training data)}
  \label{fig:iris-high}
\end{figure*}

\FloatBarrier

\section{MNIST diagonal reparameterization experiment details}
\label{subsec:mnist-gauge-methodology}
The diagonal reparameterization experiment is implemented in the notebook \texttt{MNIST-ReLU-gauge-dead.ipynb}.
Inputs are normalized MNIST \citep[MIT license]{lecun_mnist_2010} images flattened into \(\mathbb R^{784}\).
The network is an Equinox ReLU MLP with architecture \(784 \to 100 \to 100 \to 10\), corresponding to three affine weight matrices and two hidden ReLU layers.
It is trained with cross-entropy loss using SGD with learning rate \(10^{-2}\), momentum \(0.9\), minibatch size \(60\), and the stopping criterion of at least \(97\%\) test accuracy.

Training takes about 5 minutes on an NVIDIA T1200 GPU.

For a trained network with weight matrices \(W_1,W_2,W_3\), the trivial 2-norm Lipschitz bound is
\[
  K = \prod_{\ell=1}^3 \left\|W_\ell\right\|_2.
\]

The ECLipsE comparison is computed by a NumPy/CVXPY implementation of the layerwise SDP recursion for activations slope-restricted in \([0,1]\).
The implementation uses the specialization \(m=1/2\), \(p=0\), initializes \(M_0=I\), and solves the per-layer SDPs with SCS using tolerance \(10^{-9}\) and up to \(200{,}000\) iterations.
The final ECLipsE estimate is computed from the largest eigenvalue of \(W_3^\top W_3 M_2^{-1}\).

The gauge-adjusted trivial bound uses the positive homogeneity of ReLU.
For positive diagonal matrices \(D_1,D_2\), the notebook replaces
\[
  (W_1,W_2,W_3)
  \quad\text{by}\quad
  (D_1W_1,\; D_2W_2D_1^{-1},\; W_3D_2^{-1}),
\]
which leaves the realized ReLU network unchanged but changes the product of spectral norms.
The optimization variables are the log-diagonal entries of \(D_1\) and \(D_2\), initialized at zero.
The objective is the squared trivial bound
\[
  \left\|D_1W_1\right\|_2^2
  \left\|D_2W_2D_1^{-1}\right\|_2^2
  \left\|W_3D_2^{-1}\right\|_2^2,
\]
with gradients supplied by JAX \citep{deepmind_deepmind_2020} and minimized using SciPy's \texttt{trust-constr} method with the SR1 Hessian approximation \citep{virtanen_scipy_2020}.

\section{MNIST dead-neuron experiment details}
\label{subsec:dead-neuron-mnist-methodology}
The dead-neuron experiment in \cref{tab:dead-neuron-mnist} is implemented in the notebook \texttt{MNIST-ReLU-gauge-dead.ipynb}.
The model is an Equinox ReLU MLP with architecture \(784 \to 100 \to 100 \to 10\), corresponding to three affine weight matrices and two hidden ReLU layers.
Inputs are normalized MNIST images flattened into \(\mathbb R^{784}\), and the data loss is the mean cross-entropy
\[
  -\frac1n \sum_{i=1}^n \log \operatorname{softmax}(f(x_i))_{y_i}.
\]
The trivial 2-norm Lipschitz bound is computed as
\[
  K = \prod_{\ell=1}^3 \left\|W^\ell\right\|_2.
\]

The adversarial initialization starts from an Equinox MLP and modifies one hidden pathway, using indices \(j=x=y=0\) and scale \(100\).
The second weight matrix is changed so that the preactivation of the selected neuron in the second hidden layer is
\[
  x = -100\,\operatorname{ReLU}(h_j) \leq 0,
\]
where \(h_j\) is the selected neuron in the first hidden layer.
The outgoing column of this dead neuron in the final weight matrix is zeroed except for one output edge of weight \(100\), so the pathway contributes
\[
  100\,\operatorname{ReLU}(x)=0
\]
to the selected logit on every input.
The selected row of the second weight matrix and the selected column of the first hidden layer's outgoing weights are zeroed except for the dead edge, and the selected column of the final weight matrix is zeroed except for the output edge.
This prevents the dead construction from leaking into other neurons, while inflating the trivial bound by approximately the product of the two scale factors.

Both the unregularized and regularized networks are trained for \(8\) epochs with minibatch size \(60\), SGD learning rate \(10^{-2}\), and momentum \(0.9\).
The regularized run minimizes
\[
  \mathrm{cross\text{-}entropy} + \lambda K^2,
  \qquad \lambda = 10^{-6},
\]
implemented as the product of squared spectral norms of the three weight matrices.
During training, the notebook records the negative log-likelihood, test accuracy, and \(K\).
The final test accuracies are \(97.58\%\) without the penalty and \(96.43\%\) with the penalty.
Training takes about 10 minutes per network on an NVIDIA T1200 GPU.

After training, the notebook evaluates three Lipschitz quantities.
First, \(K\) is recomputed directly as the product of spectral norms.
Second, the ECLipsE upper bound is computed by a NumPy/CVXPY implementation of the layerwise SDP recursion for activations slope-restricted in \([0,1]\), using SCS with tolerance \(10^{-9}\) and up to \(200{,}000\) iterations per SDP.
Third, the empirical lower bound \(\widehat L\) is obtained by maximizing the secant ratio in \cref{sec:lower-bounds}: the notebook minimizes the negative log-ratio with L-BFGS over pairs \((x,y)\), using \(20\) restarts initialized from training examples and \(\epsilon=10^{-8}\).
\begin{table}[h]
  \centering
  \begin{tabular}{@{}lrrrrr@{}}
    \toprule
    & lower bound \(\widehat L\) & ECLipsE & trivial & trivial/\(\widehat L\) & ECLipsE/\(\widehat L\) \\
    \midrule
    no penalty & 19.59 & 6755.76 & 30060.89 & 1534.34 & 344.82 \\
    penalty \(\lambda=10^{-6}\) & 14.74 & 130.19 & 218.58 & 14.83 & 8.83 \\
    \bottomrule
  \end{tabular}
  \caption{Lipschitz upper and lower bounds on a 3-layer ReLU MNIST network initialized with an adversarial dead neuron.}
  \label{tab:dead-neuron-mnist}
\end{table}

ECLipsE takes about 20 minutes on an Intel Core i7-11850H CPU.
The trivial and lower bounds are nearly instantaneous.

\section{MNIST three-way architecture comparison}
\label{subsec:mnist-three-way-methodology}
The three-way MNIST comparison in \cref{tab:mnist-three-way-performance,tab:mnist-three-way-lipschitz} is implemented in the notebook \texttt{MNIST-three-way.ipynb}.
Inputs are normalized MNIST images flattened into \(\mathbb R^{784}\), and all models have two hidden layers of width \(100\) and \(10\) output logits.
The baseline and regularized ReLU models are Equinox MLPs with biases and architecture \(784 \to 100 \to 100 \to 10\).
The cos/sin model is a bias-free \texttt{PolyMLP} with the same hidden widths, the \((\cos,\sin)\) activation \texttt{SinCosTwo}, and random circular phase initialization.

All three networks are trained for \(20\) epochs with minibatch size \(60\), using the parameter-free Cocob optimizer.
Each training run takes about 20 minutes on an NVIDIA T1200 GPU.
The data term is Optax's multiclass hinge loss, evaluated on integer class labels.
The two regularized runs use \(\lambda=10^{-2}\) and optimize the hinge loss plus a penalty on the squared trivial 2-norm Lipschitz upper bound \(K^2\).
For the ReLU networks, \(K\) is the product of the spectral norms of the three affine weight matrices.
For the cos/sin network, \(K\) is the bound returned by the network's \texttt{lip(2)} method.
After training, the notebook evaluates train and test hinge losses, train and test accuracies, \(K\), the wall-clock time to compute \(K\), an empirical lower bound \(\widehat L\), and the wall-clock time to compute \(\widehat L\).
The lower bound \(\widehat L\) is computed by maximizing the secant ratio with \(20\) L-BFGS-B restarts initialized from training examples, using the negative log-ratio objective with \(\epsilon=10^{-8}\).

\begin{table}[h]
\centering
\caption{MNIST three-way comparison: final predictive performance after 20 epochs.}
\label{tab:mnist-three-way-performance}
\begin{tabular}{@{}lrrrrr@{}}
\toprule
config & \(\lambda\) & train loss & test loss & train acc. & test acc. \\
\midrule
ReLU & \(0\) & 0.0136 & 0.0696 & 0.9953 & 0.9754 \\
ReLU & \(10^{-2}\) & 0.0302 & 0.0596 & 0.9919 & 0.9809 \\
Cos/Sin & \(10^{-2}\) & 0.0180 & 0.0517 & 0.9959 & 0.9826 \\
\bottomrule
\end{tabular}
\end{table}

\begin{table}[h]
\centering
\caption{MNIST three-way comparison: trivial Lipschitz upper bound \(K\), empirical lower bound \(\widehat L\), computation times, and upper-to-lower ratio.}
\label{tab:mnist-three-way-lipschitz}
\begin{tabular}{@{}lrrrrrr@{}}
\toprule
config & \(\lambda\) & \(K\) & \(K\) time & \(\widehat L\) & \(\widehat L\) time & \(K/\widehat L\) \\
\midrule
ReLU & \(0\) & 605.171 & 0.03s & 48.014 & 2.88s & 12.60 \\
ReLU & \(10^{-2}\) & 3.314 & 0.02s & 2.614 & 1.96s & 1.27 \\
Cos/Sin & \(10^{-2}\) & 3.741 & 0.02s & 3.082 & 8.16s & 1.21 \\
\bottomrule
\end{tabular}
\end{table}

\clearpage
\input{checklist.tex}

\end{document}

%% file: checklist.tex
\section*{NeurIPS Paper Checklist}

The checklist is designed to encourage best practices for responsible machine learning research, addressing issues of reproducibility, transparency, research ethics, and societal impact. Do not remove the checklist: {\bf The papers not including the checklist will be desk rejected.} The checklist should follow the references and follow the (optional) supplemental material.  The checklist does NOT count towards the page
limit. 

Please read the checklist guidelines carefully for information on how to answer these questions. For each question in the checklist:
\begin{itemize}
    \item You should answer \answerYes{}, \answerNo{}, or \answerNA{}.
    \item \answerNA{} means either that the question is Not Applicable for that particular paper or the relevant information is Not Available.
    \item Please provide a short (1--2 sentence) justification right after your answer (even for \answerNA). 
\end{itemize}

{\bf The checklist answers are an integral part of your paper submission.} They are visible to the reviewers, area chairs, senior area chairs, and ethics reviewers. You will also be asked to include it (after eventual revisions) with the final version of your paper, and its final version will be published with the paper.

The reviewers of your paper will be asked to use the checklist as one of the factors in their evaluation. While \answerYes{} is generally preferable to \answerNo{}, it is perfectly acceptable to answer \answerNo{} provided a proper justification is given (e.g., error bars are not reported because it would be too computationally expensive'' or ``we were unable to find the license for the dataset we used''). In general, answering \answerNo{} or \answerNA{} is not grounds for rejection. While the questions are phrased in a binary way, we acknowledge that the true answer is often more nuanced, so please just use your best judgment and write a justification to elaborate. All supporting evidence can appear either in the main paper or the supplemental material, provided in appendix. If you answer \answerYes{} to a question, in the justification please point to the section(s) where related material for the question can be found.

IMPORTANT, please:
\begin{itemize}
    \item {\bf Delete this instruction block, but keep the section heading ``NeurIPS Paper Checklist"},
    \item  {\bf Keep the checklist subsection headings, questions/answers and guidelines below.}
    \item {\bf Do not modify the questions and only use the provided macros for your answers}.
\end{itemize}


\begin{enumerate}

\item {\bf Claims}
    \item[] Question: Do the main claims made in the abstract and introduction accurately reflect the paper's contributions and scope?
    \item[] Answer: \answerYes{}
    \item[] Justification: The outline of the paper includes two negative sections, ``not sufficient'' and ``not necessary'' as well as a positive section as promised.
    \item[] Guidelines:
    \begin{itemize}
        \item The answer \answerNA{} means that the abstract and introduction do not include the claims made in the paper.
        \item The abstract and/or introduction should clearly state the claims made, including the contributions made in the paper and important assumptions and limitations. A \answerNo{} or \answerNA{} answer to this question will not be perceived well by the reviewers. 
        \item The claims made should match theoretical and experimental results, and reflect how much the results can be expected to generalize to other settings. 
        \item It is fine to include aspirational goals as motivation as long as it is clear that these goals are not attained by the paper. 
    \end{itemize}

\item {\bf Limitations}
    \item[] Question: Does the paper discuss the limitations of the work performed by the authors?
    \item[] Answer: \answerYes{} 
    \item[] Justification: The paper clarifies that our scope is restricted to Lipschitz constants of unconstrained neural networks; therefore, we do not make recommendations targeting robustness and generalization at large.
    Our numerical examples support specific theoretical claims and are not meant to validate a new methodology for machine learning.
    \item[] Guidelines:
    \begin{itemize}
        \item The answer \answerNA{} means that the paper has no limitation while the answer \answerNo{} means that the paper has limitations, but those are not discussed in the paper. 
        \item The authors are encouraged to create a separate ``Limitations'' section in their paper.
        \item The paper should point out any strong assumptions and how robust the results are to violations of these assumptions (e.g., independence assumptions, noiseless settings, model well-specification, asymptotic approximations only holding locally). The authors should reflect on how these assumptions might be violated in practice and what the implications would be.
        \item The authors should reflect on the scope of the claims made, e.g., if the approach was only tested on a few datasets or with a few runs. In general, empirical results often depend on implicit assumptions, which should be articulated.
        \item The authors should reflect on the factors that influence the performance of the approach. For example, a facial recognition algorithm may perform poorly when image resolution is low or images are taken in low lighting. Or a speech-to-text system might not be used reliably to provide closed captions for online lectures because it fails to handle technical jargon.
        \item The authors should discuss the computational efficiency of the proposed algorithms and how they scale with dataset size.
        \item If applicable, the authors should discuss possible limitations of their approach to address problems of privacy and fairness.
        \item While the authors might fear that complete honesty about limitations might be used by reviewers as grounds for rejection, a worse outcome might be that reviewers discover limitations that aren't acknowledged in the paper. The authors should use their best judgment and recognize that individual actions in favor of transparency play an important role in developing norms that preserve the integrity of the community. Reviewers will be specifically instructed to not penalize honesty concerning limitations.
    \end{itemize}

\item {\bf Theory assumptions and proofs}
    \item[] Question: For each theoretical result, does the paper provide the full set of assumptions and a complete (and correct) proof?
    \item[] Answer: \answerYes{}
    \item[] Justification: Proofs are given in the appendix.
    \item[] Guidelines:
    \begin{itemize}
        \item The answer \answerNA{} means that the paper does not include theoretical results. 
        \item All the theorems, formulas, and proofs in the paper should be numbered and cross-referenced.
        \item All assumptions should be clearly stated or referenced in the statement of any theorems.
        \item The proofs can either appear in the main paper or the supplemental material, but if they appear in the supplemental material, the authors are encouraged to provide a short proof sketch to provide intuition. 
        \item Inversely, any informal proof provided in the core of the paper should be complemented by formal proofs provided in appendix or supplemental material.
        \item Theorems and Lemmas that the proof relies upon should be properly referenced. 
    \end{itemize}

    \item {\bf Experimental result reproducibility}
    \item[] Question: Does the paper fully disclose all the information needed to reproduce the main experimental results of the paper to the extent that it affects the main claims and/or conclusions of the paper (regardless of whether the code and data are provided or not)?
    \item[] Answer: \answerYes{} 
    \item[] Justification: The methodology is explained in reproducible detail in the appendix, and code is provided.
    \item[] Guidelines:
    \begin{itemize}
        \item The answer \answerNA{} means that the paper does not include experiments.
        \item If the paper includes experiments, a \answerNo{} answer to this question will not be perceived well by the reviewers: Making the paper reproducible is important, regardless of whether the code and data are provided or not.
        \item If the contribution is a dataset and\slash or model, the authors should describe the steps taken to make their results reproducible or verifiable. 
        \item Depending on the contribution, reproducibility can be accomplished in various ways. For example, if the contribution is a novel architecture, describing the architecture fully might suffice, or if the contribution is a specific model and empirical evaluation, it may be necessary to either make it possible for others to replicate the model with the same dataset, or provide access to the model. In general. releasing code and data is often one good way to accomplish this, but reproducibility can also be provided via detailed instructions for how to replicate the results, access to a hosted model (e.g., in the case of a large language model), releasing of a model checkpoint, or other means that are appropriate to the research performed.
        \item While NeurIPS does not require releasing code, the conference does require all submissions to provide some reasonable avenue for reproducibility, which may depend on the nature of the contribution. For example
        \begin{enumerate}
            \item If the contribution is primarily a new algorithm, the paper should make it clear how to reproduce that algorithm.
            \item If the contribution is primarily a new model architecture, the paper should describe the architecture clearly and fully.
            \item If the contribution is a new model (e.g., a large language model), then there should either be a way to access this model for reproducing the results or a way to reproduce the model (e.g., with an open-source dataset or instructions for how to construct the dataset).
            \item We recognize that reproducibility may be tricky in some cases, in which case authors are welcome to describe the particular way they provide for reproducibility. In the case of closed-source models, it may be that access to the model is limited in some way (e.g., to registered users), but it should be possible for other researchers to have some path to reproducing or verifying the results.
        \end{enumerate}
    \end{itemize}

\item {\bf Open access to data and code}
    \item[] Question: Does the paper provide open access to the data and code, with sufficient instructions to faithfully reproduce the main experimental results, as described in supplemental material?
    \item[] Answer: \answerYes{} 
    \item[] Justification: Code is provided which loads data from widely used ML repositories.
    \item[] Guidelines:
    \begin{itemize}
        \item The answer \answerNA{} means that paper does not include experiments requiring code.
        \item Please see the NeurIPS code and data submission guidelines (\url{https://neurips.cc/public/guides/CodeSubmissionPolicy}) for more details.
        \item While we encourage the release of code and data, we understand that this might not be possible, so \answerNo{} is an acceptable answer. Papers cannot be rejected simply for not including code, unless this is central to the contribution (e.g., for a new open-source benchmark).
        \item The instructions should contain the exact command and environment needed to run to reproduce the results. See the NeurIPS code and data submission guidelines (\url{https://neurips.cc/public/guides/CodeSubmissionPolicy}) for more details.
        \item The authors should provide instructions on data access and preparation, including how to access the raw data, preprocessed data, intermediate data, and generated data, etc.
        \item The authors should provide scripts to reproduce all experimental results for the new proposed method and baselines. If only a subset of experiments are reproducible, they should state which ones are omitted from the script and why.
        \item At submission time, to preserve anonymity, the authors should release anonymized versions (if applicable).
        \item Providing as much information as possible in supplemental material (appended to the paper) is recommended, but including URLs to data and code is permitted.
    \end{itemize}

\item {\bf Experimental setting/details}
    \item[] Question: Does the paper specify all the training and test details (e.g., data splits, hyperparameters, how they were chosen, type of optimizer) necessary to understand the results?
    \item[] Answer: \answerYes{} 
    \item[] Justification: We provide all necessary details in the experimental appendices.
    \item[] Guidelines:
    \begin{itemize}
        \item The answer \answerNA{} means that the paper does not include experiments.
        \item The experimental setting should be presented in the core of the paper to a level of detail that is necessary to appreciate the results and make sense of them.
        \item The full details can be provided either with the code, in appendix, or as supplemental material.
    \end{itemize}

\item {\bf Experiment statistical significance}
    \item[] Question: Does the paper report error bars suitably and correctly defined or other appropriate information about the statistical significance of the experiments?
    \item[] Answer: \answerNo{} 
    \item[] Justification: As the emphasis of this paper is on neural network analysis and theoretical properties rather than on empirical performance, we do not report statistical significance.
    \item[] Guidelines:
    \begin{itemize}
        \item The answer \answerNA{} means that the paper does not include experiments.
        \item The authors should answer \answerYes{} if the results are accompanied by error bars, confidence intervals, or statistical significance tests, at least for the experiments that support the main claims of the paper.
        \item The factors of variability that the error bars are capturing should be clearly stated (for example, train/test split, initialization, random drawing of some parameter, or overall run with given experimental conditions).
        \item The method for calculating the error bars should be explained (closed form formula, call to a library function, bootstrap, etc.)
        \item The assumptions made should be given (e.g., Normally distributed errors).
        \item It should be clear whether the error bar is the standard deviation or the standard error of the mean.
        \item It is OK to report 1-sigma error bars, but one should state it. The authors should preferably report a 2-sigma error bar than state that they have a 96\% CI, if the hypothesis of Normality of errors is not verified.
        \item For asymmetric distributions, the authors should be careful not to show in tables or figures symmetric error bars that would yield results that are out of range (e.g., negative error rates).
        \item If error bars are reported in tables or plots, the authors should explain in the text how they were calculated and reference the corresponding figures or tables in the text.
    \end{itemize}

\item {\bf Experiments compute resources}
    \item[] Question: For each experiment, does the paper provide sufficient information on the computer resources (type of compute workers, memory, time of execution) needed to reproduce the experiments?
    \item[] Answer: \answerYes{} 
    \item[] Justification: The paper provides information on the computer resources needed to reproduce the experiments in the appendix.
    \item[] Guidelines:
    \begin{itemize}
        \item The answer \answerNA{} means that the paper does not include experiments.
        \item The paper should indicate the type of compute workers CPU or GPU, internal cluster, or cloud provider, including relevant memory and storage.
        \item The paper should provide the amount of compute required for each of the individual experimental runs as well as estimate the total compute. 
        \item The paper should disclose whether the full research project required more compute than the experiments reported in the paper (e.g., preliminary or failed experiments that didn't make it into the paper). 
    \end{itemize}
    
\item {\bf Code of ethics}
    \item[] Question: Does the research conducted in the paper conform, in every respect, with the NeurIPS Code of Ethics \url{https://neurips.cc/public/EthicsGuidelines}?
    \item[] Answer: \answerYes{} 
    \item[] Justification: Our paper does not involve human subjects or sensitive datasets, and the contribution is largely theoretical machine learning.
    \item[] Guidelines:
    \begin{itemize}
        \item The answer \answerNA{} means that the authors have not reviewed the NeurIPS Code of Ethics.
        \item If the authors answer \answerNo, they should explain the special circumstances that require a deviation from the Code of Ethics.
        \item The authors should make sure to preserve anonymity (e.g., if there is a special consideration due to laws or regulations in their jurisdiction).
    \end{itemize}

\item {\bf Broader impacts}
    \item[] Question: Does the paper discuss both potential positive societal impacts and negative societal impacts of the work performed?
    \item[] Answer: \answerNo{}
    \item[] Justification: Our paper expressly divorces the algorithmic problem of Lipschitz constant estimation from its downstream implications. We are not aware of whether any of the algorithms we critique are used outside of academic research.
    \item[] Guidelines:
    \begin{itemize}
        \item The answer \answerNA{} means that there is no societal impact of the work performed.
        \item If the authors answer \answerNA{} or \answerNo, they should explain why their work has no societal impact or why the paper does not address societal impact.
        \item Examples of negative societal impacts include potential malicious or unintended uses (e.g., disinformation, generating fake profiles, surveillance), fairness considerations (e.g., deployment of technologies that could make decisions that unfairly impact specific groups), privacy considerations, and security considerations.
        \item The conference expects that many papers will be foundational research and not tied to particular applications, let alone deployments. However, if there is a direct path to any negative applications, the authors should point it out. For example, it is legitimate to point out that an improvement in the quality of generative models could be used to generate Deepfakes for disinformation. On the other hand, it is not needed to point out that a generic algorithm for optimizing neural networks could enable people to train models that generate Deepfakes faster.
        \item The authors should consider possible harms that could arise when the technology is being used as intended and functioning correctly, harms that could arise when the technology is being used as intended but gives incorrect results, and harms following from (intentional or unintentional) misuse of the technology.
        \item If there are negative societal impacts, the authors could also discuss possible mitigation strategies (e.g., gated release of models, providing defenses in addition to attacks, mechanisms for monitoring misuse, mechanisms to monitor how a system learns from feedback over time, improving the efficiency and accessibility of ML).
    \end{itemize}
    
\item {\bf Safeguards}
    \item[] Question: Does the paper describe safeguards that have been put in place for responsible release of data or models that have a high risk for misuse (e.g., pre-trained language models, image generators, or scraped datasets)?
    \item[] Answer: \answerNA{} 
    \item[] Justification: We do not use data or models with those risks.
    \item[] Guidelines:
    \begin{itemize}
        \item The answer \answerNA{} means that the paper poses no such risks.
        \item Released models that have a high risk for misuse or dual-use should be released with necessary safeguards to allow for controlled use of the model, for example by requiring that users adhere to usage guidelines or restrictions to access the model or implementing safety filters. 
        \item Datasets that have been scraped from the Internet could pose safety risks. The authors should describe how they avoided releasing unsafe images.
        \item We recognize that providing effective safeguards is challenging, and many papers do not require this, but we encourage authors to take this into account and make a best faith effort.
    \end{itemize}

\item {\bf Licenses for existing assets}
    \item[] Question: Are the creators or original owners of assets (e.g., code, data, models), used in the paper, properly credited and are the license and terms of use explicitly mentioned and properly respected?
    \item[] Answer: \answerYes{} 
    \item[] Justification: In the appendix, an asset is cited when it is first mentioned.
    \item[] Guidelines:
    \begin{itemize}
        \item The answer \answerNA{} means that the paper does not use existing assets.
        \item The authors should cite the original paper that produced the code package or dataset.
        \item The authors should state which version of the asset is used and, if possible, include a URL.
        \item The name of the license (e.g., CC-BY 4.0) should be included for each asset.
        \item For scraped data from a particular source (e.g., website), the copyright and terms of service of that source should be provided.
        \item If assets are released, the license, copyright information, and terms of use in the package should be provided. For popular datasets, \url{paperswithcode.com/datasets} has curated licenses for some datasets. Their licensing guide can help determine the license of a dataset.
        \item For existing datasets that are re-packaged, both the original license and the license of the derived asset (if it has changed) should be provided.
        \item If this information is not available online, the authors are encouraged to reach out to the asset's creators.
    \end{itemize}

\item {\bf New assets}
    \item[] Question: Are new assets introduced in the paper well documented and is the documentation provided alongside the assets?
    \item[] Answer: \answerNA{} 
    \item[] Justification: We do not release new assets.
    \item[] Guidelines:
    \begin{itemize}
        \item The answer \answerNA{} means that the paper does not release new assets.
        \item Researchers should communicate the details of the dataset\slash code\slash model as part of their submissions via structured templates. This includes details about training, license, limitations, etc. 
        \item The paper should discuss whether and how consent was obtained from people whose asset is used.
        \item At submission time, remember to anonymize your assets (if applicable). You can either create an anonymized URL or include an anonymized zip file.
    \end{itemize}

\item {\bf Crowdsourcing and research with human subjects}
    \item[] Question: For crowdsourcing experiments and research with human subjects, does the paper include the full text of instructions given to participants and screenshots, if applicable, as well as details about compensation (if any)? 
    \item[] Answer: \answerNA{} 
    \item[] Justification: We do not involve crowdsourcing nor research with human subjects.
    \item[] Guidelines:
    \begin{itemize}
        \item The answer \answerNA{} means that the paper does not involve crowdsourcing nor research with human subjects.
        \item Including this information in the supplemental material is fine, but if the main contribution of the paper involves human subjects, then as much detail as possible should be included in the main paper. 
        \item According to the NeurIPS Code of Ethics, workers involved in data collection, curation, or other labor should be paid at least the minimum wage in the country of the data collector. 
    \end{itemize}

\item {\bf Institutional review board (IRB) approvals or equivalent for research with human subjects}
    \item[] Question: Does the paper describe potential risks incurred by study participants, whether such risks were disclosed to the subjects, and whether Institutional Review Board (IRB) approvals (or an equivalent approval/review based on the requirements of your country or institution) were obtained?
    \item[] Answer: \answerNA{} 
    \item[] Justification: We do not involve crowdsourcing nor research with human subjects.
    \item[] Guidelines:
    \begin{itemize}
        \item The answer \answerNA{} means that the paper does not involve crowdsourcing nor research with human subjects.
        \item Depending on the country in which research is conducted, IRB approval (or equivalent) may be required for any human subjects research. If you obtained IRB approval, you should clearly state this in the paper. 
        \item We recognize that the procedures for this may vary significantly between institutions and locations, and we expect authors to adhere to the NeurIPS Code of Ethics and the guidelines for their institution. 
        \item For initial submissions, do not include any information that would break anonymity (if applicable), such as the institution conducting the review.
    \end{itemize}

\item {\bf Declaration of LLM usage}
    \item[] Question: Does the paper describe the usage of LLMs if it is an important, original, or non-standard component of the core methods in this research? Note that if the LLM is used only for writing, editing, or formatting purposes and does \emph{not} impact the core methodology, scientific rigor, or originality of the research, declaration is not required.
    \item[] Answer: \answerNA{} 
    \item[] Justification: We take full responsibility for the correctness of our research, and do not use LLMs in an important, original, or non-standard component of our methods.
    \item[] Guidelines:
    \begin{itemize}
        \item The answer \answerNA{} means that the core method development in this research does not involve LLMs as any important, original, or non-standard components.
        \item Please refer to our LLM policy in the NeurIPS handbook for what should or should not be described.
    \end{itemize}

\end{enumerate}

%% file: lipschitz.bib
@book{wainwright_high-dimensional_2019,
	address = {Cambridge},
	series = {Cambridge {Series} in {Statistical} and {Probabilistic} {Mathematics}},
	title = {High-{Dimensional} {Statistics}: {A} {Non}-{Asymptotic} {Viewpoint}},
	isbn = {978-1-108-49802-9},
	shorttitle = {High-{Dimensional} {Statistics}},
	url = {https://www.cambridge.org/core/books/highdimensional-statistics/8A91ECEEC38F46DAB53E9FF8757C7A4E},
	doi = {10.1017/9781108627771},
	urldate = {2023-04-25},
	publisher = {Cambridge University Press},
	author = {Wainwright, Martin J.},
	year = {2019},
}

@misc{junnarkar_synthesizing_2024,
	title = {Synthesizing {Neural} {Network} {Controllers} with {Closed}-{Loop} {Dissipativity} {Guarantees}},
	url = {http://arxiv.org/abs/2404.07373},
	doi = {10.48550/arXiv.2404.07373},
	urldate = {2024-08-26},
	publisher = {arXiv},
	author = {Junnarkar, Neelay and Arcak, Murat and Seiler, Peter},
	month = apr,
	year = {2024},
	note = {arXiv:2404.07373 [cs, eess]},
}

@article{ebenbauer_analysis_2006,
	title = {Analysis and design of polynomial control systems using dissipation inequalities and sum of squares},
	volume = {30},
	copyright = {https://www.elsevier.com/tdm/userlicense/1.0/},
	issn = {00981354},
	url = {https://linkinghub.elsevier.com/retrieve/pii/S0098135406001463},
	doi = {10.1016/j.compchemeng.2006.05.014},
	language = {en},
	number = {10-12},
	urldate = {2024-09-10},
	journal = {Computers \& Chemical Engineering},
	author = {Ebenbauer, Christian and Allgöwer, Frank},
	month = sep,
	year = {2006},
	pages = {1590--1602},
}

@article{revay_recurrent_2024,
	title = {Recurrent {Equilibrium} {Networks}: {Flexible} {Dynamic} {Models} {With} {Guaranteed} {Stability} and {Robustness}},
	volume = {69},
	issn = {1558-2523},
	shorttitle = {Recurrent {Equilibrium} {Networks}},
	url = {https://ieeexplore.ieee.org/document/10179161},
	doi = {10.1109/TAC.2023.3294101},
	number = {5},
	urldate = {2025-05-09},
	journal = {IEEE Transactions on Automatic Control},
	author = {Revay, Max and Wang, Ruigang and Manchester, Ian R.},
	month = may,
	year = {2024},
	pages = {2855--2870},
}

@article{virtanen_scipy_2020,
	title = {{SciPy} 1.0: {Fundamental} {Algorithms} for {Scientific} {Computing} in {Python}},
	volume = {17},
	url = {https://doi.org/10.1038/s41592-019-0686-2},
	doi = {10.1038/s41592-019-0686-2},
	urldate = {2025-06-25},
	journal = {Nature Methods},
	author = {Virtanen, Pauli and Gommers, Ralf and Oliphant, Travis E. and Haberland, Matt and Reddy, Tyler and Cournapeau, David and Burovski, Evgeni and Peterson, Pearu and Weckesser, Warren and Bright, Jonathan and van der Walt, Stéfan J. and Brett, Matthew and Wilson, Joshua and Millman, K. Jarrod and Mayorov, Nikolay and Nelson, Andrew R. J. and Jones, Eric and Kern, Robert and Larson, Eric and Carey, C J and Polat, Ilhan and Feng, Yu and Moore, Eric W. and VanderPlas, Jake and Laxalde, Denis and Perktold, Josef and Cimrman, Robert and Henriksen, Ian and Quintero, E. A. and Harris, Charles R. and Archibald, Anne M. and Ribeiro, Antônio H. and Pedregosa, Fabian and van Mulbregt, Paul and {SciPy 1.0 Contributors}},
	year = {2020},
	pages = {261--272},
}

@article{kidger_equinox_2021,
	title = {Equinox: neural networks in {JAX} via callable {PyTrees} and filtered transformations},
	journal = {Differentiable Programming workshop at Neural Information Processing Systems 2021},
	author = {Kidger, Patrick and Garcia, Cristian},
	year = {2021},
}

@misc{deepmind_deepmind_2020,
	title = {The {DeepMind} {JAX} {Ecosystem}},
	url = {http://github.com/google-deepmind},
	author = {{DeepMind} and Babuschkin, Igor and Baumli, Kate and Bell, Alison and Bhupatiraju, Surya and Bruce, Jake and Buchlovsky, Peter and Budden, David and Cai, Trevor and Clark, Aidan and Danihelka, Ivo and Dedieu, Antoine and Fantacci, Claudio and Godwin, Jonathan and Jones, Chris and Hemsley, Ross and Hennigan, Tom and Hessel, Matteo and Hou, Shaobo and Kapturowski, Steven and Keck, Thomas and Kemaev, Iurii and King, Michael and Kunesch, Markus and Martens, Lena and Merzic, Hamza and Mikulik, Vladimir and Norman, Tamara and Papamakarios, George and Quan, John and Ring, Roman and Ruiz, Francisco and Sanchez, Alvaro and Sartran, Laurent and Schneider, Rosalia and Sezener, Eren and Spencer, Stephen and Srinivasan, Srivatsan and Stanojević, Miloš and Stokowiec, Wojciech and Wang, Luyu and Zhou, Guangyao and Viola, Fabio},
	year = {2020},
}

@inproceedings{xu_eclipse_2024,
	title = {{ECLipsE}: {Efficient} {Compositional} {Lipschitz} {Constant} {Estimation} for {Deep} {Neural} {Networks}},
	volume = {37},
	url = {https://proceedings.neurips.cc/paper_files/paper/2024/file/1419d8554191a65ea4f2d8e1057973e4-Paper-Conference.pdf},
	booktitle = {Advances in {Neural} {Information} {Processing} {Systems}},
	publisher = {Curran Associates, Inc.},
	author = {Xu, Yuezhu and Sivaranjani, S.},
	editor = {Globerson, A. and Mackey, L. and Belgrave, D. and Fan, A. and Paquet, U. and Tomczak, J. and Zhang, C.},
	year = {2024},
	pages = {10414--10441},
}

@article{zhang_recurjac_2019,
	title = {{RecurJac}: {An} {Efficient} {Recursive} {Algorithm} for {Bounding} {Jacobian} {Matrix} of {Neural} {Networks} and {Its} {Applications}},
	volume = {33},
	copyright = {https://www.aaai.org},
	issn = {2374-3468, 2159-5399},
	shorttitle = {{RecurJac}},
	url = {https://ojs.aaai.org/index.php/AAAI/article/view/4522},
	doi = {10.1609/aaai.v33i01.33015757},
	number = {01},
	urldate = {2025-10-24},
	journal = {Proceedings of the AAAI Conference on Artificial Intelligence},
	author = {Zhang, Huan and Zhang, Pengchuan and Hsieh, Cho-Jui},
	month = jul,
	year = {2019},
	pages = {5757--5764},
}

@inproceedings{fazlyab_efficient_2019,
	title = {Efficient and {Accurate} {Estimation} of {Lipschitz} {Constants} for {Deep} {Neural} {Networks}},
	volume = {32},
	url = {https://proceedings.neurips.cc/paper_files/paper/2019/hash/95e1533eb1b20a97777749fb94fdb944-Abstract.html},
	urldate = {2025-10-24},
	booktitle = {Advances in {Neural} {Information} {Processing} {Systems}},
	publisher = {Curran Associates, Inc.},
	author = {Fazlyab, Mahyar and Robey, Alexander and Hassani, Hamed and Morari, Manfred and Pappas, George},
	year = {2019},
}

@misc{xu_eclipse-gen-local_2025,
	title = {{ECLipsE}-{Gen}-{Local}: {Efficient} {Compositional} {Local} {Lipschitz} {Estimates} for {Deep} {Neural} {Networks}},
	shorttitle = {{ECLipsE}-{Gen}-{Local}},
	url = {http://arxiv.org/abs/2510.05261},
	doi = {10.48550/arXiv.2510.05261},
	urldate = {2025-10-24},
	publisher = {arXiv},
	author = {Xu, Yuezhu and Sivaranjani, S.},
	month = oct,
	year = {2025},
	note = {arXiv:2510.05261 [cs]},
}

@inproceedings{lai_enhancing_2025,
	title = {Enhancing {Certified} {Robustness} via {Block} {Reflector} {Orthogonal} {Layers} and {Logit} {Annealing} {Loss}},
	url = {https://openreview.net/forum?id=S2K5MyRjrL},
	booktitle = {Forty-second {International} {Conference} on {Machine} {Learning}},
	author = {Lai, Bo-Han and Huang, Pin-Han and Kung, Bo-Han and Chen, Shang-Tse},
	year = {2025},
}

@inproceedings{leino_globally-robust_2021,
	title = {Globally-robust neural networks},
	booktitle = {International {Conference} on {Machine} {Learning}},
	publisher = {PMLR},
	author = {Leino, Klas and Wang, Zifan and Fredrikson, Matt},
	year = {2021},
	pages = {6212--6222},
}

@inproceedings{weng_towards_2018,
	title = {Towards fast computation of certified robustness for relu networks},
	booktitle = {International {Conference} on {Machine} {Learning}},
	publisher = {PMLR},
	author = {Weng, Lily and Zhang, Huan and Chen, Hongge and Song, Zhao and Hsieh, Cho-Jui and Daniel, Luca and Boning, Duane and Dhillon, Inderjit},
	year = {2018},
	pages = {5276--5285},
}

@misc{szegedy_intriguing_2014,
	title = {Intriguing properties of neural networks},
	url = {http://arxiv.org/abs/1312.6199},
	doi = {10.48550/arXiv.1312.6199},
	urldate = {2025-12-11},
	publisher = {arXiv},
	author = {Szegedy, Christian and Zaremba, Wojciech and Sutskever, Ilya and Bruna, Joan and Erhan, Dumitru and Goodfellow, Ian and Fergus, Rob},
	month = feb,
	year = {2014},
	note = {arXiv:1312.6199 [cs]},
}

@techreport{dubach_multiplicative_2025,
	type = {Article},
	title = {Multiplicative {Regularization} {Generalizes} {Better} {Than} {Additive} {Regularization}},
	url = {https://dspace.mit.edu/handle/1721.1/159862},
	language = {en},
	urldate = {2025-12-11},
	institution = {Center for Brains, Minds and Machines (CBMM)},
	author = {Dubach, Rafael and Abdallah, Mohamed S. and Poggio, Tomaso},
	month = jul,
	year = {2025},
	note = {Accepted: 2025-07-02T20:03:52Z},
}

@misc{yoshida_spectral_2017,
	title = {Spectral {Norm} {Regularization} for {Improving} the {Generalizability} of {Deep} {Learning}},
	url = {http://arxiv.org/abs/1705.10941},
	doi = {10.48550/arXiv.1705.10941},
	urldate = {2025-12-11},
	publisher = {arXiv},
	author = {Yoshida, Yuichi and Miyato, Takeru},
	month = may,
	year = {2017},
	note = {arXiv:1705.10941 [stat]},
}

@article{gouk_regularisation_2021,
	title = {Regularisation of neural networks by enforcing {Lipschitz} continuity},
	volume = {110},
	issn = {0885-6125, 1573-0565},
	url = {http://link.springer.com/10.1007/s10994-020-05929-w},
	doi = {10.1007/s10994-020-05929-w},
	language = {en},
	number = {2},
	urldate = {2025-12-11},
	journal = {Machine Learning},
	author = {Gouk, Henry and Frank, Eibe and Pfahringer, Bernhard and Cree, Michael J.},
	month = feb,
	year = {2021},
	pages = {393--416},
}

@inproceedings{miyato_spectral_2018,
	title = {Spectral {Normalization} for {Generative} {Adversarial} {Networks}},
	url = {https://openreview.net/forum?id=B1QRgziT-},
	booktitle = {International {Conference} on {Learning} {Representations}},
	author = {Miyato, Takeru and Kataoka, Toshiki and Koyama, Masanori and Yoshida, Yuichi},
	year = {2018},
}

@inproceedings{bartlett_spectrally-normalized_2017,
	address = {Red Hook, NY, USA},
	series = {{NIPS}'17},
	title = {Spectrally-normalized margin bounds for neural networks},
	isbn = {978-1-5108-6096-4},
	booktitle = {Proceedings of the 31st {International} {Conference} on {Neural} {Information} {Processing} {Systems}},
	publisher = {Curran Associates Inc.},
	author = {Bartlett, Peter L. and Foster, Dylan J. and Telgarsky, Matus},
	year = {2017},
	pages = {6241--6250},
}

@inproceedings{virmaux_lipschitz_2018,
	title = {Lipschitz {Regularity} of {Deep} {Neural} {Networks}: {Analysis} and {Efficient} {Estimation}},
	volume = {31},
	url = {https://proceedings.neurips.cc/paper_files/paper/2018/file/d54e99a6c03704e95e6965532dec148b-Paper.pdf},
	booktitle = {Advances in {Neural} {Information} {Processing} {Systems}},
	publisher = {Curran Associates, Inc.},
	author = {Virmaux, Aladin and Scaman, Kevin},
	editor = {Bengio, S. and Wallach, H. and Larochelle, H. and Grauman, K. and Cesa-Bianchi, N. and Garnett, R.},
	year = {2018},
}

@inproceedings{cisse_parseval_2017,
	series = {Proceedings of {Machine} {Learning} {Research}},
	title = {Parseval {Networks}: {Improving} {Robustness} to {Adversarial} {Examples}},
	volume = {70},
	url = {https://proceedings.mlr.press/v70/cisse17a.html},
	booktitle = {Proceedings of the 34th {International} {Conference} on {Machine} {Learning}},
	publisher = {PMLR},
	author = {Cisse, Moustapha and Bojanowski, Piotr and Grave, Edouard and Dauphin, Yann and Usunier, Nicolas},
	editor = {Precup, Doina and Teh, Yee Whye},
	year = {2017},
	pages = {854--863},
}

@inproceedings{jordan_exactly_2020,
	title = {Exactly {Computing} the {Local} {Lipschitz} {Constant} of {ReLU} {Networks}},
	volume = {33},
	url = {https://proceedings.neurips.cc/paper_files/paper/2020/file/5227fa9a19dce7ba113f50a405dcaf09-Paper.pdf},
	booktitle = {Advances in {Neural} {Information} {Processing} {Systems}},
	publisher = {Curran Associates, Inc.},
	author = {Jordan, Matt and Dimakis, Alexandros G.},
	editor = {Larochelle, H. and Ranzato, M. and Hadsell, R. and Balcan, M. F. and Lin, H.},
	year = {2020},
	pages = {7344--7353},
}

@inproceedings{chen_semialgebraic_2020,
	title = {Semialgebraic {Optimization} for {Lipschitz} {Constants} of {ReLU} {Networks}},
	volume = {33},
	url = {https://proceedings.neurips.cc/paper_files/paper/2020/file/dea9ddb25cbf2352cf4dec30222a02a5-Paper.pdf},
	booktitle = {Advances in {Neural} {Information} {Processing} {Systems}},
	publisher = {Curran Associates, Inc.},
	author = {Chen, Tong and Lasserre, Jean B. and Magron, Victor and Pauwels, Edouard},
	editor = {Larochelle, H. and Ranzato, M. and Hadsell, R. and Balcan, M. F. and Lin, H.},
	year = {2020},
	pages = {19189--19200},
}

@inproceedings{golowich_size-independent_2018,
	series = {Proceedings of {Machine} {Learning} {Research}},
	title = {Size-{Independent} {Sample} {Complexity} of {Neural} {Networks}},
	volume = {75},
	url = {https://proceedings.mlr.press/v75/golowich18a.html},
	booktitle = {Proceedings of the 31st {Conference} {On} {Learning} {Theory}},
	publisher = {PMLR},
	author = {Golowich, Noah and Rakhlin, Alexander and Shamir, Ohad},
	editor = {Bubeck, Sébastien and Perchet, Vianney and Rigollet, Philippe},
	year = {2018},
	pages = {297--299},
}

@misc{neyshabur_data-dependent_2016,
	title = {Data-{Dependent} {Path} {Normalization} in {Neural} {Networks}},
	url = {https://arxiv.org/abs/1511.06747},
	author = {Neyshabur, Behnam and Tomioka, Ryota and Salakhutdinov, Ruslan and Srebro, Nathan},
	year = {2016},
	note = {\_eprint: 1511.06747},
}

@inproceedings{neyshabur_norm-based_2015,
	title = {Norm-based capacity control in neural networks},
	booktitle = {Conference on learning theory},
	publisher = {PMLR},
	author = {Neyshabur, Behnam and Tomioka, Ryota and Srebro, Nathan},
	year = {2015},
	pages = {1376--1401},
}

@inproceedings{neyshabur_path-sgd_2015,
	title = {Path-{SGD}: {Path}-{Normalized} {Optimization} in {Deep} {Neural} {Networks}},
	volume = {28},
	shorttitle = {Path-{SGD}},
	url = {https://proceedings.neurips.cc/paper_files/paper/2015/hash/eaa32c96f620053cf442ad32258076b9-Abstract.html},
	urldate = {2025-12-17},
	booktitle = {Advances in {Neural} {Information} {Processing} {Systems}},
	publisher = {Curran Associates, Inc.},
	author = {Neyshabur, Behnam and Salakhutdinov, Russ R and Srebro, Nati},
	year = {2015},
}

@inproceedings{neyshabur_pac-bayesian_2018,
	title = {A {PAC}-{Bayesian} {Approach} to {Spectrally}-{Normalized} {Margin} {Bounds} for {Neural} {Networks}},
	url = {https://openreview.net/forum?id=Skz_WfbCZ},
	booktitle = {International {Conference} on {Learning} {Representations}},
	author = {Neyshabur, Behnam and Bhojanapalli, Srinadh and Srebro, Nathan},
	year = {2018},
}

@article{shi_efficiently_2022,
	title = {Efficiently computing local lipschitz constants of neural networks via bound propagation},
	volume = {35},
	journal = {Advances in Neural Information Processing Systems},
	author = {Shi, Zhouxing and Wang, Yihan and Zhang, Huan and Kolter, J Zico and Hsieh, Cho-Jui},
	year = {2022},
	pages = {2350--2364},
}

@phdthesis{bethune_deep_2024,
	type = {Theses},
	title = {Deep learning with {Lipschitz} constraints},
	url = {https://theses.hal.science/tel-04674274},
	school = {Université de Toulouse},
	author = {Béthune, Louis},
	month = feb,
	year = {2024},
	note = {Issue: 2024TLSES014},
}

@inproceedings{singla_improved_2022,
	title = {Improved deterministic l2 robustness on {CIFAR}-10 and {CIFAR}-100},
	url = {https://openreview.net/forum?id=tD7eCtaSkR},
	booktitle = {International {Conference} on {Learning} {Representations}},
	author = {Singla, Sahil and Singla, Surbhi and Feizi, Soheil},
	year = {2022},
}

@inproceedings{wang_direct_2023,
	title = {Direct parameterization of lipschitz-bounded deep networks},
	booktitle = {International {Conference} on {Machine} {Learning}},
	publisher = {PMLR},
	author = {Wang, Ruigang and Manchester, Ian},
	year = {2023},
	pages = {36093--36110},
}

@article{xu_learning_2023,
	title = {Learning {Dissipative} {Neural} {Dynamical} {Systems}},
	volume = {7},
	issn = {2475-1456},
	url = {https://ieeexplore.ieee.org/abstract/document/10335923},
	doi = {10.1109/LCSYS.2023.3337851},
	urldate = {2025-12-18},
	journal = {IEEE Control Systems Letters},
	author = {Xu, Yuezhu and Sivaranjani, S.},
	year = {2023},
	pages = {3531--3536},
}

@inproceedings{nenov_almost_2024,
	title = {({Almost}) {Smooth} {Sailing}: {Towards} {Numerical} {Stability} of {Neural} {Networks} {Through} {Differentiable} {Regularization} of the {Condition} {Number}},
	booktitle = {2nd {Differentiable} {Almost} {Everything} {Workshop} at the 41st {International} {Conference} on {Machine} {Learning}},
	author = {Nenov, Rossen and Haider, Daniel and Balazs, Peter},
	year = {2024},
}

@inproceedings{raghunathan_certified_2018,
	title = {Certified {Defenses} against {Adversarial} {Examples}},
	url = {https://openreview.net/forum?id=Bys4ob-Rb},
	language = {en},
	urldate = {2025-12-18},
	author = {Raghunathan, Aditi and Steinhardt, Jacob and Liang, Percy},
	month = feb,
	year = {2018},
}

@article{zhang_efficient_2018,
	title = {Efficient neural network robustness certification with general activation functions},
	volume = {31},
	journal = {Advances in neural information processing systems},
	author = {Zhang, Huan and Weng, Tsui-Wei and Chen, Pin-Yu and Hsieh, Cho-Jui and Daniel, Luca},
	year = {2018},
}

@inproceedings{nam_position_2025,
	title = {Position: {Solve} {Layerwise} {Linear} {Models} {First} to {Understand} {Neural} {Dynamical} {Phenomena} ({Neural} {Collapse}, {Emergence}, {Lazy}/{Rich} {Regime}, and {Grokking})},
	shorttitle = {Position},
	url = {https://openreview.net/forum?id=nrlGUdlo16},
	language = {en},
	urldate = {2026-01-13},
	author = {Nam, Yoonsoo and Lee, Seok Hyeong and Dominé, Clémentine Carla Juliette and Park, Yeachan and London, Charles and Choi, Wonyl and Göring, Niclas Alexander and Lee, Seungjai},
	month = jun,
	year = {2025},
}

@misc{shi_neural_2025,
	title = {Neural {Network} {Verification} with {Branch}-and-{Bound} for {General} {Nonlinearities}},
	url = {http://arxiv.org/abs/2405.21063},
	doi = {10.48550/arXiv.2405.21063},
	urldate = {2026-01-21},
	publisher = {arXiv},
	author = {Shi, Zhouxing and Jin, Qirui and Kolter, Zico and Jana, Suman and Hsieh, Cho-Jui and Zhang, Huan},
	month = feb,
	year = {2025},
	note = {arXiv:2405.21063 [cs]},
}

@inproceedings{xue_chordal_2022,
	title = {Chordal sparsity for lipschitz constant estimation of deep neural networks},
	booktitle = {2022 {IEEE} 61st {Conference} on {Decision} and {Control} ({CDC})},
	publisher = {IEEE},
	author = {Xue, Anton and Lindemann, Lars and Robey, Alexander and Hassani, Hamed and Pappas, George J and Alur, Rajeev},
	year = {2022},
	pages = {3389--3396},
}

@inproceedings{bhowmick_lipbab_2021,
	address = {Cham},
	title = {{LipBaB}: {Computing} {Exact} {Lipschitz} {Constant} of {ReLU} {Networks}},
	isbn = {978-3-030-86380-7},
	shorttitle = {{LipBaB}},
	doi = {10.1007/978-3-030-86380-7_13},
	language = {en},
	booktitle = {Artificial {Neural} {Networks} and {Machine} {Learning} – {ICANN} 2021},
	publisher = {Springer International Publishing},
	author = {Bhowmick, Aritra and D’Souza, Meenakshi and Raghavan, G. Srinivasa},
	editor = {Farkaš, Igor and Masulli, Paolo and Otte, Sebastian and Wermter, Stefan},
	year = {2021},
	pages = {151--162},
}

@misc{sbihi_miqcqp_2024,
	title = {{MIQCQP} reformulation of the {ReLU} neural networks {Lipschitz} constant estimation problem},
	url = {http://arxiv.org/abs/2402.01199},
	doi = {10.48550/arXiv.2402.01199},
	urldate = {2026-01-28},
	publisher = {arXiv},
	author = {Sbihi, Mohammed and Jan, Sophie and Couellan, Nicolas},
	month = feb,
	year = {2024},
	note = {arXiv:2402.01199 [math]},
}

@inproceedings{orabona_training_2017,
	title = {Training {Deep} {Networks} without {Learning} {Rates} {Through} {Coin} {Betting}},
	volume = {30},
	url = {https://proceedings.neurips.cc/paper_files/paper/2017/hash/7c82fab8c8f89124e2ce92984e04fb40-Abstract.html},
	urldate = {2026-01-28},
	booktitle = {Advances in {Neural} {Information} {Processing} {Systems}},
	publisher = {Curran Associates, Inc.},
	author = {Orabona, Francesco and Tommasi, Tatiana},
	year = {2017},
}

@misc{araujo_unified_2023,
	title = {A {Unified} {Algebraic} {Perspective} on {Lipschitz} {Neural} {Networks}},
	url = {http://arxiv.org/abs/2303.03169},
	doi = {10.48550/arXiv.2303.03169},
	urldate = {2026-04-01},
	publisher = {arXiv},
	author = {Araujo, Alexandre and Havens, Aaron and Delattre, Blaise and Allauzen, Alexandre and Hu, Bin},
	month = oct,
	year = {2023},
	note = {arXiv:2303.03169 [cs]},
}

@misc{latorre_lipschitz_2020,
	title = {Lipschitz constant estimation of {Neural} {Networks} via sparse polynomial optimization},
	url = {http://arxiv.org/abs/2004.08688},
	doi = {10.48550/arXiv.2004.08688},
	urldate = {2026-05-04},
	publisher = {arXiv},
	author = {Latorre, Fabian and Rolland, Paul and Cevher, Volkan},
	month = apr,
	year = {2020},
	note = {arXiv:2004.08688 [cs]},
}

@article{lecun_mnist_2010,
	title = {{MNIST} handwritten digit database},
	volume = {2},
	journal = {ATT Labs [Online]. Available: http://yann.lecun.com/exdb/mnist},
	author = {LeCun, Yann and Cortes, Corinna and Burges, CJ},
	year = {2010},
}

@article{pedregosa_scikit-learn_2011,
	title = {Scikit-learn: {Machine} {Learning} in {Python}},
	volume = {12},
	journal = {Journal of Machine Learning Research},
	author = {Pedregosa, F. and Varoquaux, G. and Gramfort, A. and Michel, V. and Thirion, B. and Grisel, O. and Blondel, M. and Prettenhofer, P. and Weiss, R. and Dubourg, V. and Vanderplas, J. and Passos, A. and Cournapeau, D. and Brucher, M. and Perrot, M. and Duchesnay, E.},
	year = {2011},
	pages = {2825--2830},
}
